\DeclareMathOperator*{\argmax}{arg\,max}
\newcommand{\m}{m}
\newcommand{\negm}{\Tilde{\m}}
\newcommand{\parent}{\mathsf{parent}}
\newcommand{\leftc}{\mathsf{left}}
\newcommand{\rightc}{\mathsf{right}}
\newcommand{\depth}{D_C}
\newcommand{\tree}{T}
\newcommand{\paramtree}{\tree_C}
\newcommand{\treeset}{\mathcal{\tree}}
\newcommand{\rootv}{\mathsf{root}}
\newcommand{\vrtx}{v}
\newcommand{\leafp}{l}
\newcommand{\srclength}{S}
\newcommand{\trgtok}{x}
\newcommand{\trglength}{N}
\newcommand{\trgstep}{n}
\newcommand{\target}{\trgtok_{1:\trglength}}
\newcommand{\alignment}{L}
\newcommand{\recurse}{\pi}
\newcommand{\succleave}{\mathcal{S}}
\newcommand{\M}{M}
\newcommand{\h}{\Vec{h}}
\newcommand{\produce}{f}
\newcommand{\predleaf}{g_\leafp}
\newcommand{\predwords}{g_\trgtok}
\newcommand{\cell}{\mathsf{Cell}}
\newcommand{\encode}{\mathsf{Encode}}
\newcommand*\circled[1]{\tikz[baseline=(char.base)]{ \node[shape=circle,draw,inner sep=0.8pt] (char) {#1};}}
\newtheorem{proposition}{Proposition} 
\newtheorem{definition}{Definition}
\title{Recursive Top-Down Production for Sentence Generation \\with Latent Trees}
\author{Shawn Tan\thanks{~~Equal contribution}\\
  Mila / University of Montreal \\
  \texttt{tanjings@mila.quebec} \And
  Yikang Shen$^*$\\
  Mila / University of Montreal \\
  \texttt{yi-kang.shen@umontreal.ca} \AND
Timothy J. O'Donnell \\
  Dept. of Linguistics \\
  Mila / McGill University \\
 Canada CIFAR AI Chair  \And
  Alessandro Sordoni \\
  Microsoft Research \And
  Aaron Courville \\
  Mila / University of Montreal \\
  Canada CIFAR AI Chair}
\date{}
\begin{document}
\maketitle
\begin{abstract}
We model the recursive production property of context-free grammars for natural and synthetic languages.
To this end, we present a dynamic programming algorithm that marginalises over latent binary tree structures with $N$ leaves, allowing us to compute the likelihood of a sequence of $N$ tokens under a latent tree model, which we maximise to train a recursive neural function.
We demonstrate performance on two synthetic tasks: SCAN \citep{lake2017generalization}, where it outperforms previous models on the \textsc{length} split, and English question formation \cite{mccoy2020does}, where it performs comparably to decoders with the ground-truth tree structure.
We also present experimental results on German-English translation on the Multi30k dataset \citep{multi30k}, and qualitatively analyse the induced tree structures our model learns for the SCAN tasks and the German-English translation task.
\end{abstract}


\tikzstyle{word}=[inner sep=1.2pt, thick,fill={rgb,255: red,191; green,191; 
blue,191}, draw=black, shape=circle, tikzit category=tree]
\tikzstyle{blank}=[inner sep=1.2pt, thick, fill=white, draw=black, shape=circle, tikzit category=tree]
\tikzstyle{node_circle}=[draw=red, shape=ellipse, dashed]
\tikzstyle{word_fake}=[fill=black, draw=black, shape=circle]

\tikzstyle{parent}=[-, thick, draw=black]
\tikzstyle{transition}=[->, draw=red]

\section{Introduction}
Given the hierarchical nature of natural language, tree structures have long been considered a fundamental part of natural language understanding. In recent years, a number of studies have shown that  incorporating these structures into deep learning systems can be beneficial for various natural language tasks \cite{socher2013recursive, bowman2015large,eriguchi2016tree}.

Various work has explored the introduction of syntactic structures into recursive encoders, either with explicit syntactic information~\cite{du.w:2020,socher2010learning,dyer2016recurrent} or by means of unsupervised latent tree learning~\cite{williams2018latent,shen2019ordered,kim2019unsupervised}.
Some attempts at formulating structured decoders are~\citet{zhang2015top} and~\citet{alvarez2016tree} which  propose binary top-down tree LSTM architectures for natural language.~\citet{chen2018tree} proposes a tree-structured decoder for code generation.
These methods require ground-truth trees from an external source, and this extra input may not be available for all languages or data sources.

In this work, we propose a tree-based probabilistic decoder model for sequence-to-sequence tasks.
Our model generates sentences from a latent tree structure that aims to reflect natural language syntax. The method assumes that each token in a sentence is emitted at the leaves of a full but latent binary tree (Fig.~\ref{fig:intro}). The tree is obtained by recursively producing node embeddings from a root embedding with a recursive neural network. Word emission probabilities are function of the leaf embeddings. We describe a novel dynamic programming algorithm for exact marginalisation over the large number of latent binary trees.

\begin{figure}[t]
\centering
\includegraphics[scale=0.5]{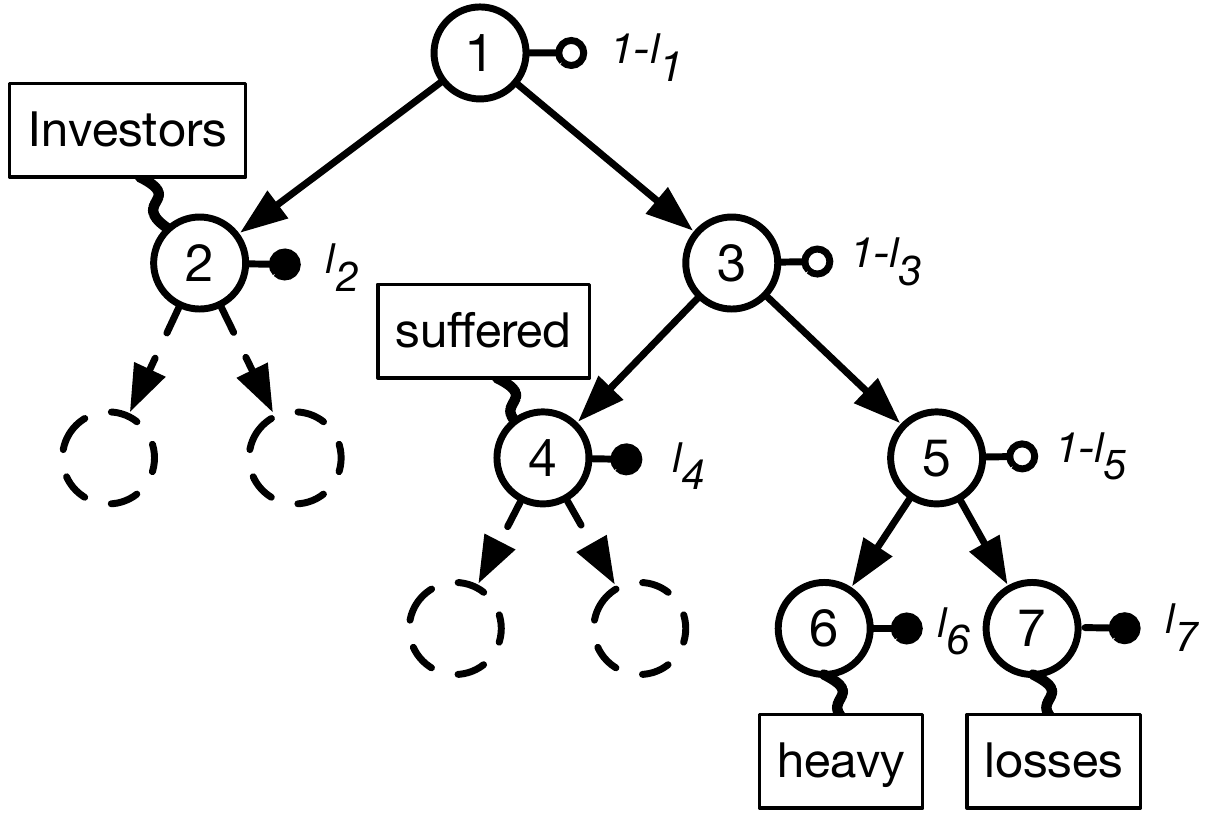}
\caption{Our generative model is a recursive top-down neural network that recursively splits a root node embedding with some node-dependent probability. When the splitting stops, it emits a word with some probability. The joint probability of a sentence and its associated binary tree is the product of the probability of the tree
$(1 - \leafp_1)(1 - \leafp_3) (1-\leafp_5) \leafp_2\leafp_4 \leafp_6 \leafp_7$ and the probabilities of the word emitted at its leaves. We devise a novel marginalisation algorithm over binary trees to compute the likelihood of a sentence.}
\label{fig:intro}
\end{figure}

Our generative model parametrizes a prior over binary trees with a stick-breaking process, similar to the ``penetration probabilities'' defined in~\citet{mochihashi2008infinite}.
It is related to a long tradition of unsupervised grammar induction models that formulate a generative model of sentences~\citep{klein2001natural,bod2006all,klein2005natural}.

Unlike more recent bottom-up approaches such as~\citet{kim2019compound} which require the inside-outside algorithm~\citep{baker1979trainable} to marginalise over tree structures, our approach is top-down and comes with an efficient algorithm to perform marginalisation. Top-down models can be useful, as the decoder is encouraged by design to keep global context while generating sentences~\citep{du2019top,gu-etal-2018-top}. %

In the next section, we will describe the algorithm that marginalises over latent tree structures under some independence assumptions.
We first introduce these assumptions and show that by introducing the notion of successive leaves, we can efficiently sum over different tree structures. We then introduce the details of the recursive architecture used. Finally, we present the experimental results  of the model in Section~\ref{sec:experiments}.

\section{Method}
\subsection{Generative Process}
We assume that each sequence is generated by means of an underlying tree structure which takes the form of a \emph{full binary tree}, which is a tree for which each node is either a leaf or has two children. A sequence of tokens is produced with the following generative process: first, sample a full binary tree $\tree$ from a distribution $p(\tree)$. Denote the sets of leaves of $\tree$ as $\alignment(\tree)$. Then for each leaf $\vrtx$ in $\alignment(\tree)$, sample a token $\trgtok \in \mathcal{V}$, where $\mathcal{V}$ is the vocabulary, from a conditional distribution $p(\trgtok | v)$.

Under this model, the probability of a sequence $\target$ can be obtained by marginalising over possible tree structures with $\trglength$ leaves:
\begin{align}
\begin{split}
p(\target) &= \sum_{\tree} p(\target, \tree) \\
&= \sum_{\tree} p(\target | \tree) p(\tree)
\end{split}\label{eqn:marginal}
\end{align}
We assume that the probability of sequences with lengths different from the number of leaves in the tree is 0.
Our generative process prescribes that, given the tree structure, the probability of each word is independent of the other words,~i.e.:
\begin{align}
    p(\trgtok_{1:\trglength} | \tree)
    &= \prod_{\trgstep=1}^\trglength p(\trgtok_\trgstep\mid \alignment_\trgstep(\tree)), \label{eqn:factorised}
\end{align}
where $L_n(T)$ represents the $n$-th leaf of $\tree$.
In what follows, we describe an algorithm to efficiently marginalise over possible tree structures, such that the involved distributions can be parametrized by neural networks and can be trained end-to-end by maximizing log-likelihood of the observed sequences. We first describe how we model the prior $p(\tree)$ and then how to compute $p(x_{1:N})$ efficiently.

\subsection{Probability of a full binary tree}
\begin{figure*}[t]
\vskip 0.2in
\begin{center}
\input{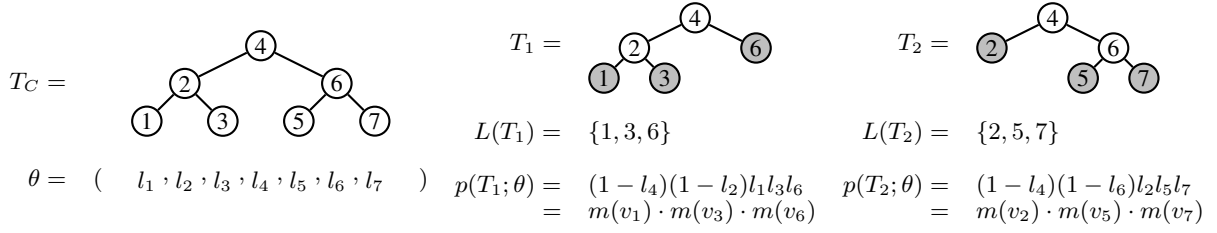}
\caption{In this figure, $\mathsf{root}(\paramtree) = \vrtx_4$. Given $\trglength = 3$, there are two possible trees, $\tree_1$ and $\tree_2$. The probabilities of the trees can be expressed as the recurrent process described in Equation~\ref{eqn:treebreaking}, or as a product of $m(\cdot)$ at the leaf vertices of the internal tree.}
\label{fig:worked_example}
\end{center}
\vskip -0.2in
\end{figure*}
We model the prior probability of a full binary tree $p(\tree)$ by using a branching process similar to the stick-breaking construction, which can be used to model a series of stochastic binary decisions until success \cite{sethuraman1994constructive}. In our model, we perform a series of binary decisions at each vertex, starting at the root and branching downwards. Each decision consists in whether to expand the current node by creating two children or not. This binary decision is therefore modeled with a Bernoulli random variable.

Let us define a complete binary tree $\paramtree$ of depth $\depth$ with vertices $\{\vrtx_1, \dots, \vrtx_M\}, M = {2^{\depth+1} - 1}$.
Each vertex above is associated with a Bernoulli parameter $l$, $\theta = \{\leafp_1, \dots, \leafp_{2^{\depth+1} - 1}\}$, $\leafp_i \in [0, 1]$, modeling its split probability.
The probabilities $(1 - \leafp_i)$ are similar to the ``penetration probabilities'' mentioned in \citet{mochihashi2008infinite}. 
A full binary tree depth $D \le \depth$ is contained in $\paramtree$, so we will refer to it as an \emph{internal tree} from here on\footnote{This is not to be confused with the notion of subtrees.}. See Fig.~1 for an example of two internal trees with three leaves. Its probability can be expressed using parameters $\leafp_i$ as follows.
The probability $p(\tree) = \pi(\mathsf{root})$, where $\pi$ is defined recursively as:
\begin{align}
    \recurse(\vrtx_i) = 
    \left\{\begin{array}{ll}
    \leafp_i      & \textrm{if $\vrtx_i \in \alignment(\tree)$,} \\
    & \\
    (1 - \leafp_i)~\cdot  &  \\
    \quad \recurse(\leftc(\vrtx_i))~\cdot &   \textrm{else}\\
    \qquad \recurse(\rightc(\vrtx_i)) & 
    \end{array}\right.
\label{eqn:treebreaking}
\end{align}
where $\leftc(\vrtx_i)$ and $\rightc(\vrtx_i)$ are the left child and right child respectively.

\subsubsection{Memoizing the value at each vertex}
We can compute Eq.~\eqref{eqn:treebreaking} efficiently by storing a partial computation for each vertex and multiplying the values at the leaves to get the tree probability:
\begin{align}
    p(\tree; \theta) = \prod_{\trgstep=1}^\trglength \m(\alignment_\trgstep(\tree))
    \label{eqn:p_tree}
\end{align}
where $\alignment_\trgstep(\tree)$ denotes the vertex corresponding to the $\trgstep$-th leaf of $T$.
We define this value at the vertex $v_i$ to be $m(v_i)$:
\begin{equation}
\m(v_i) = l_i \prod_{v_j \in V_{i \rightarrow \mathsf{root}}} (1 - l_j)^{\frac{1}{2^{|V_{i \rightarrow j}|}}}
\end{equation}
where $V_{i \rightarrow j}$ denotes the set of vertices in the path from node $v_i$ to node $v_j$ inclusive.
These values can be efficiently computed with this top-down recurrence relation:
\begin{align}
    \m(v_i) &= \left(\negm(\parent(v_i))\right)^{\frac{1}{2}} \cdot l_i \label{eqn:posm} \\
    \negm(v_i) &= \left(\negm(\parent(v_i))\right)^{\frac{1}{2}} \cdot (1 - l_i) \label{eqn:negm} 
\end{align}
where the $\parent(\vrtx_i)$ is the parent of $\vrtx_i$, and $\negm(\parent(\mathsf{root})) = 1$. For example, in Fig.~\ref{fig:worked_example}, $m(1) = (1-\leafp_4)^{1/4} (1-\leafp_2)^{1/2} \leafp_1$ and we demonstrate the case for two internal trees with $D = 2$ and $\trglength = 3$ leaves.

We can then use Eq.~\eqref{eqn:factorised} and Eq.~\eqref{eqn:p_tree} to write the joint probability of a sequence and a tree:
\begin{align}
    p(\trgtok_{1:\trglength}, \tree)
    &= \prod_{\trgstep=1}^\trglength p(\trgtok_\trgstep|\alignment_\trgstep(\tree)) \cdot m(\alignment_\trgstep(\tree)) 
\end{align}
Note that the joint probability factorises as a product over the token probability and the value at the vertex. As we will see later, our method works by traversing the leaves of all possible internal trees, computing the product of the values at the leaves along the way.
Therefore, expressing the probability of a full tree as a product of these values ensures that marginalisation stays tractable.

\subsection{Marginalising over trees}
Now that we can compute the probability of a given tree, we need to marginalise over all full binary trees with exactly $\trglength$ leaves. We will denote this formally by the set $\treeset_\trglength = \{\tree~:~|\alignment(\tree)| = \trglength\}$.
The crux of the problem surrounds marginalising over $\treeset_\trglength$.
We know $|\treeset_\trglength| \leq C_{\trglength-1}$ , where $C_n$ is the $n$-th Catalan number \footnote{\url{https://oeis.org/A000108}}, with equality occuring when $\trglength \leq \depth - 1$.

\paragraph{Successive leaves}
\begin{figure}[t]
\vskip -0.1in
\begin{center}
\small
\begin{tikzpicture}[scale=0.9]
	\begin{pgfonlayer}{nodelayer}
		\node [style=blank] (0) at (-7, -3) {1};
		\node [style=blank] (1) at (-6, -2) {2};
		\node [style=blank] (2) at (-5, -3) {3};
		\node [style=blank] (3) at (-4, -1) {4};
		\node [style=blank] (4) at (-3, -3) {5};
		\node [style=blank] (5) at (-2, -2) {6};
		\node [style=blank] (6) at (-1, -3) {7};
		\node [style=blank] (10) at (-7, -4) {1};
		\node [style=blank] (11) at (-6, -4) {2};
		\node [style=blank] (12) at (-5, -4) {3};
		\node [style=blank] (13) at (-4, -4) {4};
		\node [style=blank] (14) at (-3, -4) {5};
		\node [style=blank] (15) at (-2, -4) {6};
		\node [style=blank] (16) at (-1, -4) {7};
		\node [style=blank] (20) at (-7, -5) {1};
		\node [style=blank] (21) at (-6, -5) {2};
		\node [style=blank] (22) at (-5, -5) {3};
		\node [style=blank] (23) at (-4, -5) {4};
		\node [style=blank] (24) at (-3, -5) {5};
		\node [style=blank] (25) at (-2, -5) {6};
		\node [style=blank] (26) at (-1, -5) {7};
		\node [style=none] (28) at (-8.2, -5) {$\M(\cdot, \trgstep)$};
		\node [style=none] (27) at (-8.2, -4) {$\M(\cdot, {\trgstep-1})$};
	\end{pgfonlayer}
	\begin{pgfonlayer}{edgelayer}
		\draw [style=parent] (3) to (1);
		\draw [style=parent] (1) to (0);
		\draw [style=parent] (1) to (2);
		\draw [style=parent] (3) to (5);
		\draw [style=parent] (5) to (4);
		\draw [style=parent] (5) to (6);
		\draw [style=transition, bend left=15, looseness=0.75] (0) to (2);
		\draw [style=transition, bend left=15, looseness=0.75] (4) to (6);
		\draw [style=transition, bend left=15, looseness=0.75] (1) to (5);
		\draw [style=transition, bend left=15, looseness=0.75] (1) to (4);
		\draw [style=transition, bend left=15, looseness=0.75] (2) to (5);
		\draw [style=transition, bend left=15, looseness=0.75] (2) to (4);
		\draw [style=transition] (10) to (22);
		\draw [style=transition] (14) to (26);
		\draw [style=transition] (11) to (25);
		\draw [style=transition] (11) to (24);
		\draw [style=transition] (12) to (25);
		\draw [style=transition] (12) to (24);
	\end{pgfonlayer}
\end{tikzpicture}
\caption{Successive leaf transitions for a tree of $\depth=2$. The arrows show the possible transitions from each vertex. To enumerate $\treeset_3$ (trees with 3 leaves) we start at any of the vertices the left boundary (\circled{1}, \circled{2}, or \circled{4}), and make 2 transitions (left-to-right arrows) over successive leaves to any vertex in the right boundary (\circled{4}, \circled{6} or \circled{7}), keeping track of the vertices visited along the way.
There are two ways this can be done, which are the examples shown in Figure \ref{fig:worked_example}.}
\label{fig:ctreec}
\end{center}
\end{figure}
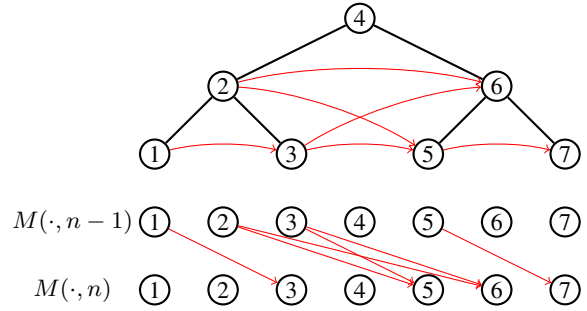
In order to efficiently enumerate all possible internal trees, we define a set of admissible transitions between the vertices of $\paramtree$.
First, let us define the left and right boundaries of a $\paramtree$. Starting from the root node, traversing down the all $\leftc$ children recursively until the leftmost leaf, all vertices visited in this process belong to the left boundary $B_l$.
This notion is similarly defined for all $\rightc$ children in the right boundary $B_r$.
Given a vertex $v$, we define the \emph{successive leaves} of $v$ as any of the next possible leaves in a internal binary tree in which $\vrtx$ is a leaf. 
As an example, in Figure \ref{fig:ctreec}, vertices \circled{5} and \circled{6} are successive leaves of both vertices \circled{2} and \circled{3}. Therefore, if we start at a vertex in the left boundary and travel along these allowed transitions until we reach the right boundary, the vertices visited along this path describe the leaves of an internal tree. This notion is independent of the length of any sequence, and a traversal from the left boundary of $\paramtree$ to the right boundary will induce the leaves of a valid internal $\tree$. As an example, in Figure \ref{fig:ctreec}, the admissible transitions \circled{1} $\rightarrow$ \circled{3} $\rightarrow$ \circled{6} form a valid internal tree, as well as \circled{1} $\rightarrow$ \circled{3} $\rightarrow$ \circled{5} $\rightarrow$ \circled{7}.

To list all pairs of allowed transitions $\vrtx_i$ to $\vrtx_j$, we compute the Cartesian product of the vertices in the right boundary of the left subtree and the left boundary of the right subtree, and do this recursively for each vertex.
See Figure \ref{fig:successive} for an illustration of the concept.
The pseudo-code for generating all such transitions in a tree is shown in Appendix \ref{alg:succ_leaves}: \textsc{SuccessiveLeaves}.
The result of $\textsc{SuccessiveLeaves}(\mathsf{root})$ is the set $\succleave$, which contains pairs of vertices $(\vrtx_i, \vrtx_j)$ such that $\vrtx_j$ is a successive leaf of $\vrtx_i$.
Taking $\trglength - 1$ transitions from the left boundary to the right boundary of $\paramtree$ results in visiting the $\trglength$ leaves of an internal tree.
Proof is in Appendix \ref{sec:proof}.

\begin{figure}[t]
\vskip 0.2in
\begin{center}
\vspace{-2em}
\begin{tikzpicture}
	\begin{pgfonlayer}{nodelayer}
		\node [style=blank] (6) at (0, 0) {};
		\node [style=none] (0) at (-1.5, -1) {};
		\node [style=none] (1) at (-2.5, -3) {};
		\node [style=none] (2) at (-0.5, -3) {};
		\node [style=none] (3) at (1.5, -1) {};
		\node [style=none] (4) at (0.5, -3) {};
		\node [style=none] (5) at (2.5, -3) {};

		\node [style=node_circle, fit={(0) (2)}, inner sep=-6pt, rotate=30] (7)  {};
		\node [style=node_circle, fit={(3) (4)}, inner sep=-6pt, rotate=330] (8)  {};
	\end{pgfonlayer}
	\begin{pgfonlayer}{edgelayer}
		\draw (0.center) to (1.center);
		\draw (2.center) to (1.center);
		\draw (2.center) to (0.center);
		\draw (3.center) to (4.center);
		\draw (5.center) to (4.center);
		\draw (5.center) to (3.center);
		\draw [style=parent] (6) to (0.center);
		\draw [style=parent] (6) to (3.center);
		\draw [style=transition, bend left=15, looseness=0.75] (7) to (8);
	\end{pgfonlayer}
\end{tikzpicture}
\caption{In a binary tree, the left boundary of any right subtree are all successive leaves of the right boundary of its corresponding left subtree.}
\label{fig:successive}
\end{center}
\vskip -0.2in
\end{figure}
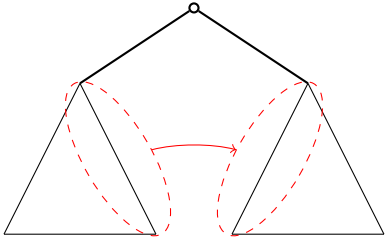

\paragraph{Marginalisation}
We can use our transitions $\mathcal{S}$ to marginalise over internal trees with $N$ leaves as follows: we fill a table $\M(\vrtx, \trgstep)$ that contains the marginal probability of prefix $\trgtok_{1:\trgstep}$, where we sum over all partial trees for which vertex $\vrtx$ has emitted token $\trgtok_\trgstep$:
\begin{align}
\begin{split}
    &\M(\vrtx_i, \trgstep) = \\ 
    &\quad \sum_{\tree~:~\alignment_\trgstep(\tree) = \vrtx_i}\prod_{\trgstep' \leq \trgstep} p(\trgtok_{\trgstep'}|\alignment_{\trgstep'}(\tree)) \cdot m(\alignment_{\trgstep'}(\tree))
\end{split}
\label{eqn:explicit_recurrence}
\end{align}
We first initialise the values at $\M(v, 1)$ at the left boundary:
\begin{align}
    \M(\vrtx_i, 1) &=  \left\{ \begin{array}{ll}
        p(\trgtok_1|\vrtx_i) \cdot m(v_i)  &  \textrm{if $v_i \in B_l$}\\
        0      &  \textrm{else}
    \end{array}\right. \notag
\end{align}
which should be the state of the table for all prefixes sequences of length 1.
Then for $1 < \trgstep \leq \trglength$, 
\begin{align}
    \M(v_i, \trgstep) &= p(\trgtok_\trgstep|\vrtx_i) \cdot m(v_i) ~\smashoperator{\sum_{v_j ~:~ (v_j, v_i) \in \succleave}}~  \M(v_j, \trgstep-1) \label{eqn:recurrence}
\end{align}
where we see that Eq.~\eqref{eqn:explicit_recurrence} can be recovered by pushing the product $p(\trgtok_\trgstep | \vrtx_i)\cdot m(\vrtx_i)$ inside the sum in Eq.~\eqref{eqn:recurrence}. %
The sum describes the situation when vertices have more than one incoming arrow, as depicted in Fig.~\ref{fig:ctreec}.
It should be noted that a large number of these values will be zero, which signify that there are no incomplete trees that end on that vertex.
In order to compute the marginalisation over $\treeset_\trglength$, we have to finally sum over the values at the right boundary:
\begin{align}
    p(\target) = \sum_{\vrtx_i \in B_r} \M(\vrtx_i, \trglength)
\end{align}
since valid full binary trees must also end on the right boundary of $\paramtree$\footnote{Since for any full binary tree, every node has either 0 or 2 children, this means that any full binary tree needs to have one leaf in $B_r$.}.  Note that the values of any trajectory that do not form a full binary tree by $\trglength - 1$ iterations,~i.e. those that do not reach the right boundary, do not get summed.
Another interesting property is that full binary trees with fewer leaves than $\trglength$ would have their trajectories reach the right boundaries much earlier, and those values do not get propagated forward once they do.

\subsection{Decoding from the model}
During decoding, we can perform the following maximisation based on a modification of the marginalisation algorithm,
\begin{align}
    \argmax_{\target, \tree} p(\target, \tree).
\end{align}
This technique borrows heavily from \citet{viterbi1967error}.
We perform the same dynamic programming procedure as above, but replacing summations with maximizations, and maintaining a back-pointer to the summand that was the highest:
\begin{align}
\M^*(v_i, \trgstep) = p(\trgtok_\trgstep|\vrtx_i) &\cdot m(v_i) \\ &\cdot ~\smashoperator{\max_{(\vrtx_j, \vrtx_i) \in \succleave}}~  \M^*(\vrtx_j, \trgstep-1) \notag \label{eqn:max_recurrence}
\end{align}
Since we do not know the length of the sequence being decoded, we need to decide on a stopping criteria. We know that any subsequent multiplication to values in $\M(\cdot, \cdot)$ would decrease it, since $p(\trgtok_\trgstep|\vrtx_i) \cdot m(v_i) \leq 1$.
Thus, we also know that if the current best full sequence has probability $p^*$, then if all probabilities at the frontier are $< p^*$, no sequence with a higher probability can be found.
We can then stop the search, and return the current best solution.
Algorithm \ref{alg:decodejoint} in the Appendix \ref{sec:decode} contains the pseudo-code for decoding.

\section{Architecture}
\subsection{Connectionist Tree (CTree) Decoder}
\begin{figure}[t]
\begin{center}
\includegraphics[width=0.6\linewidth]{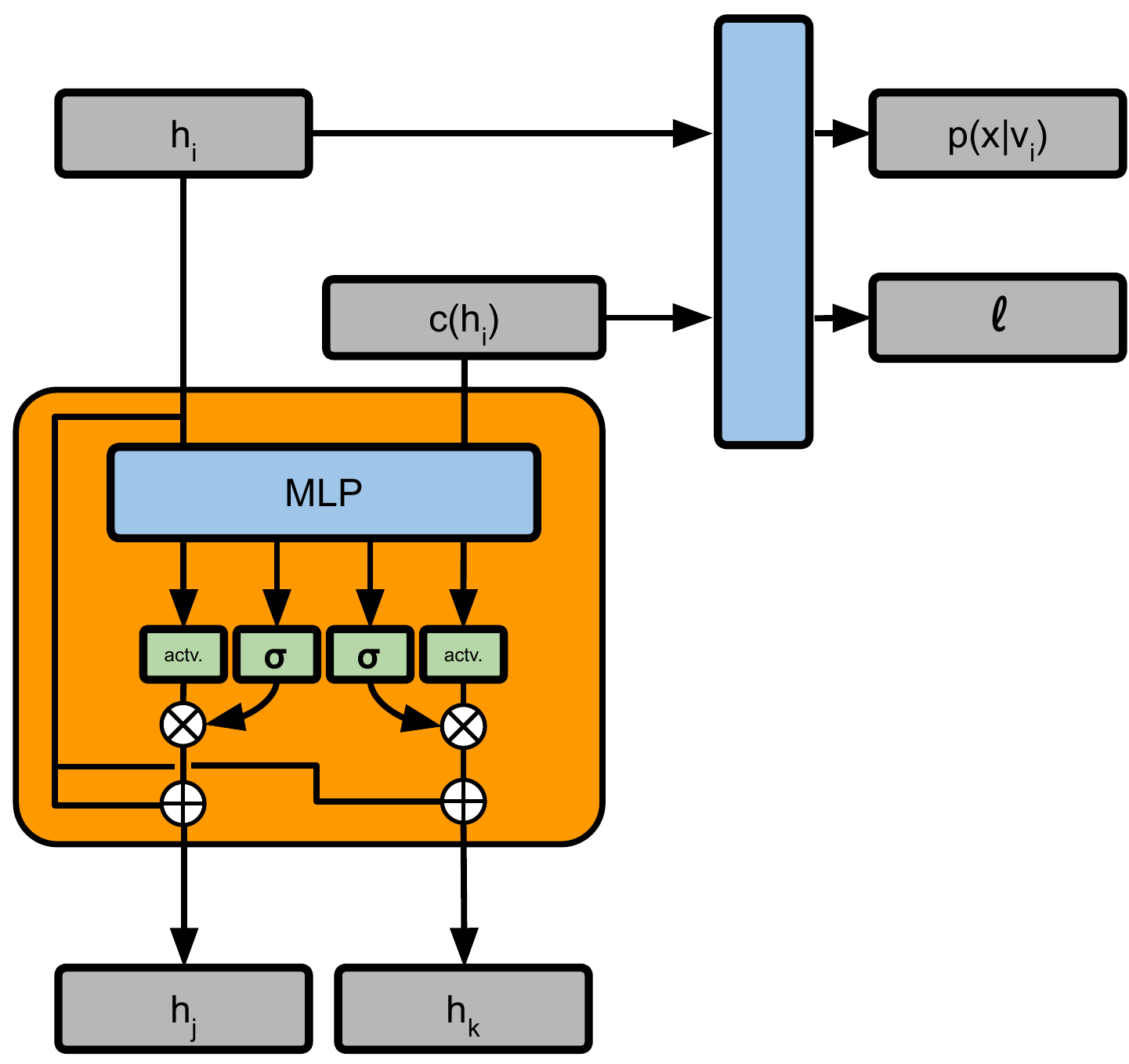}
\caption{Schema of a single production function application.
 From the representation $\h_i$, (1) compute the context vector $c(\h_{i})$ by attending on the encoder, (2) the distribution over word probabilities and the leaf probability parameter $l$, are computed, (3) apply the $\cell(\cdot, \cdot)$ function to produce the child representations $h_j$ and $h_k$. Repeat until the maximum depth is reached.}
\label{fig:model_schematic}
\end{center}
\end{figure}

We parameterize the emission probabilities $p(x | v_i)$ and the splitting probability at each vertex $l_i$ with a recursive neural network.
The neural network recursively splits a root embedding into internal hidden states of the binary tree structures via a \emph{production function} $f$:
\begin{align}
    (\h_{\leftc(\vrtx)}, \h_{\rightc(\vrtx)}) &= \produce(\h_\vrtx, \vec{c}_\vrtx)
\end{align}
where $\h_\vrtx$ is the embedding of the vertex $\vrtx$ and $\vec{c}$ is a generic context embedding that can be optionally vertex dependent and carries external information,~e.g. it can be used to pass information in an encoder-decoder setting.

We parameterise $f(\vec{h}_\vrtx, \vec{c})$ as a gated two layer neural network with a ReLU hidden layer:
\begin{align}
\vec{h} &= \mathsf{relu}(W_1\vec{h}_\vrtx + U_1\vec{c} + b_1) \notag \\
[\vec{c}_\leftc;\vec{c}_\rightc] &= \mathsf{tanh}(\mathsf{layernorm}(W_2 \cdot \vec{h} + \vec{b}_2)) \notag \\
[\vec{g}_\leftc;\vec{g}_\rightc] &= \mathsf{sigmoid}(W_3 \cdot \vec{h} + \vec{b}_3) \notag \\
\vec{h}_\leftc &= \vec{g}_\leftc \odot  \vec{c}_\leftc + (1 - \vec{g}_\leftc) \odot \vec{h}_\vrtx \notag \\
\vec{h}_\rightc &= \vec{g}_\rightc \odot  \vec{c}_\rightc + (1 - \vec{g}_\rightc) \odot \vec{h}_\vrtx \notag
\end{align}
where $\mathsf{layernorm}$ is layer normalization~\cite{ba2016layer}. We fix the hidden size to be two times of the dimension of the input vertex embedding.

The splitting probability $l_\vrtx$ and the emission probabilities $p(x | \vrtx)$ are defined as functions of the vertex embedding:
\begin{align}
p(x | \vrtx) = \predwords(\h_{\vrtx});~~ l_\vrtx = \predleaf(\h_{\vrtx})
\end{align}

The leaf prediction $g_l$ is a linear transform into a two-dimensional output space followed by a softmax. The specific form of the emission probability function $g_x$ can vary with the task. Unless specified, $g_x$ is an MLP.

\subsection{Procedural Description}
Starting with the root representation $\h_{\rho}$ and its eventual contextual information $\vec{c}_{\rho}$, we recursively apply $\produce$.
This can be done efficiently in parallel breadth-wise, doubling the hidden representations at every level.
We apply $\predleaf$ at each level, and then Eq.~\eqref{eqn:posm} and Eq.~\eqref{eqn:negm} to get $m(\vrtx)$, which depend only on the parents. We then apply $f$ recursively until a pre-defined depth $\depth$. We transform all the vertex embeddings using the emission function $\predwords$ in parallel, and multiply $p(\trgtok \mid \vrtx) \cdot m(\vrtx)$ for all vertices and words in the vocabulary. We have now computed the sufficient statistics in order to apply the algorithm described in the previous section to compute the marginal probability of the observed sentence.

$\depth$ is a hyper-parameter that depends on memory and time constraints: if $\depth$ is large, the number of representations grows exponentially with it, as does the time for computing the likelihood. If the depth of the latent trees used to generate the data has an upper bound, we can also restrict the class of trees being learned by setting $\depth$ as well.

\section{Related Work}
Non-parametric Bayesian approaches to learning a hierarchy over the observed data has been proposed in the past~\citep{ghahramani2010tree,griffiths2004hierarchical}. 
These works generally learn a prior on tree-structured data, and assumes a common super-structure that generated the corpus instead of assuming that each observed datapoint may have been produced by a different hierarchical structure. Our generative assumptions are generally stronger but they allow us for tractable marginalisation without costly iterative inference procedures,~e.g. MCMC.

Our method shares similarities with the forward algorithm~\cite{baum1967inequality,baum1968growth} which computes likelihoods for Hidden Markov Models (HMM), and CTC~\cite{graves2006connectionist}.
While the forward algorithm factors in the transition probabilities, both CTC and our algorithm have placed a conditional independence assumption in the factorisation of the likelihood of the output sequence.
The inside-outside algorithm~\cite{baker1979trainable} is usually employed when it comes to learning parameters for PCFGs.~\citet{kim2019compound} gives a modern treatment to PCFGs by introducing Compound PCFGs.
In this work, the CFG production probabilities are conditioned on a continuous latent variable, and the entire model is trained using amortized variational inference~\cite{kingma2013auto}.
This allows the production rules to be conditioned on a sentence-level random variable, allowing it to model correlations over rules that were not possible with a standard PCFG.
However, all co-dependence between the rules can only be captured through the global latent variable.
In CTC, Compound PCFGs, and our work, the fact that the dynamic programming algorithm is differentiable is exploited to train the model.

While typical language modelling is done with a left-to-right autoregressive structure, there has been recent work that change the conditional factorisation order \cite{cho2019non,yang2019xlnet}, and even learn a good factorisation order \cite{stern2019insertion,gu2019insertion}.
For hierarchical text generation,~\citet{chen2018tree} and~\citet{zhang2015tree} have attempted to model this hierarchy using ground-truth parse trees from a parser.
However, the parser was trained based on parses annotated using rules designed by linguists, which presents two challenges: (1) we may not always have these rules, particularly when it comes to low-resource languages, and (2) it may be possible that the structure required for different tasks are slightly different, enforcing the structure based on a universal parse structure may not be optimal.
\citet{jacob2018learning} attempts to learn a tree structure using discrete $\mathsf{split}$ and $\mathsf{merge}$ with REINFORCE \cite{williams1992simple}.
However, the method is known to have high variance \cite{tucker2017rebar}.

There has also been some work that use sequential models for learning a latent hierarchy.
\citet{chung2016hierarchical} again uses discrete binary sampling units to learn a hierarchy.
\citet{shen2018ordered} enforces an ordering to the hidden state of the LSTM \cite{hochreiter1997long} that allows the hidden representations to be interpreted as a tree structure.
In their follow up work, \citet{shen2019ordered} encodes sequences to a single vector representation, which we use in this work as the encoder.
\forestset{
    nice empty nodes/.style={
        for tree={
            s sep=0.1em, 
            l sep=0.3em,
            inner ysep=0.4em, 
            inner xsep=0.04em,
            l=0,
            calign=midpoint,
            fit=tight,
            parent anchor=south,
            child anchor=north,
            delay={if content={}{
                    inner sep=0pt,
                    edge path={\noexpand\path [\forestoption{edge}] 
                    			(!u.parent anchor) 
                               -- (.south)\forestoption{edge label};}
                }{}}
        },
    },
    deeper/.style={
        for tree={
            s sep=0.1em, 
            l sep=1.5em,
            inner ysep=0.4em, 
            inner xsep=0.04em,
            l=0,
            calign=midpoint,
            fit=tight,
            parent anchor=south,
            child anchor=north,
            delay={if content={}{
                    inner sep=0pt,
                    edge path={\noexpand\path [\forestoption{edge}] 
                    			(!u.parent anchor) 
                               -- (.south)\forestoption{edge label};}
                }{}}
        },
    },
}
\newcommand{\drawtree}[1]{\begin{forest}
shape=coordinate,
deeper
#1
\end{forest}}
\newcommand{\treetablecell}[2]{\scalebox{0.9}{\begin{minipage}{\linewidth}\begin{center}
\begin{tabular}{c}
#1 \\
\begin{minipage}{.75\textwidth}
\begin{center}
\drawtree{#2}
\end{center}
\end{minipage}
\end{tabular}
\end{center}
\end{minipage}}}

\section{Experiments}\label{sec:experiments}
We evaluate our method on three different sequence-to-sequence tasks. Unless otherwise stated, we are using the Ordered Memory (OM)~\citep{shen2019ordered} as our encoder. Further details can be found in Appendix~\ref{sec:encoder}.
\subsection{SCAN} \label{sec:scan}
\begin{table*}[t]
\begin{center}
\begin{small}
\begin{sc}
\newcommand{\histo}[1]{\begin{tikzpicture}\begin{axis}[width=100pt, height=53pt, ybar interval, hide axis, ymax=10 ]%
\addplot coordinates { #1 }; \end{axis}\end{tikzpicture}}
\begin{tabular}{ll rl rl rl rl}
\toprule
Model &  & \multicolumn{2}{c}{Simple}
         & \multicolumn{2}{c}{$+$ Turn Left}
         & \multicolumn{2}{c}{$+$ Jump}
         & \multicolumn{2}{c}{Length} \\
\midrule
\multicolumn{2}{l}{\citet{bastings2018jump}}        & 100  & $\pm$ 0.0 
                                                    & 59.1 & $\pm$ 16.8
                                                    & 12.5 & $\pm$ 6.6
                                                    & 18.1 & $\pm$ 1.1  \\
\multicolumn{2}{l}{\citet{bastings2018jump} - dep}  & 100  & $\pm$ 0.0
                                                    & 90.8 & $\pm$ 3.6
                                                    & 0.7  & $\pm$ 0.4
                                                    & 17.8 & $\pm$ 1.7 \\
\multicolumn{2}{l}{\citet{russin2019compositional} (LA)} & 100  & $\pm$ 0.0
                                                    & 99.9 & $\pm$ 0.16
                                                    & 78.4 & $\pm$ 27.4
                                                    & 15.2 & $\pm$ 0.7 \\
\multicolumn{2}{l}{\citet{li2019compositional} (LA)}     & 99.9 & $\pm$ 0.0
                                                    & 99.7 & $\pm$ 0.4
                                                    & 98.8 & $\pm$ 1.4
                                                    & 20.3 & $\pm$ 1.1 \\ 
OM-Seq $\cell$ + LA  & &
99.8 &$\pm$ 0.0 & %
99.4 &$\pm$ 1.4 & %
3.5 &$\pm$ 8.1 & %
20.9 &$\pm$ 3.1 \\ %
 & & 
\multicolumn{2}{c}{\histo{ (0.00, 0.00) (10.00, 0.00) (20.00, 0.00) (30.00, 0.00) (40.00, 0.00) (50.00, 0.00) (60.00, 0.00) (70.00, 0.00) (80.00, 0.00) (90.00, 9.00) (100.00, 0.) }}  & %
\multicolumn{2}{c}{\histo{ (0.00, 0.00) (10.00, 0.00) (20.00, 0.00) (30.00, 0.00) (40.00, 0.00) (50.00, 0.00) (60.00, 0.00) (70.00, 0.00) (80.00, 0.00) (90.00, 10.00) (100.00, 0.) }}  & %
\multicolumn{2}{c}{\histo{ (0.00, 9.00) (10.00, 0.00) (20.00, 1.00) (30.00, 0.00) (40.00, 0.00) (50.00, 0.00) (60.00, 0.00) (70.00, 0.00) (80.00, 0.00) (90.00, 0.00) (100.00, 0.) }}  & %
\multicolumn{2}{c}{\histo{ (0.00, 0.00) (10.00, 8.00) (20.00, 1.00) (30.00, 1.00) (40.00, 0.00) (50.00, 0.00) (60.00, 0.00) (70.00, 0.00) (80.00, 0.00) (90.00, 0.00) (100.00, 0.) }}  \\ %

\cmidrule(lr){1-10}

BiRNN-CTree + LA & & 
99.9 &$\pm$ 0.0 & %
85.5 &$\pm$ 2.2 & %
56.5 &$\pm$ 15.8 & %
19.8 &$\pm$ 0.0 \\ %
 & & 
\multicolumn{2}{c}{\histo{ (0.00, 0.00) (10.00, 0.00) (20.00, 0.00) (30.00, 0.00) (40.00, 0.00) (50.00, 0.00) (60.00, 0.00) (70.00, 0.00) (80.00, 0.00) (90.00, 10.00) (100.00, 0.) }}  & %
\multicolumn{2}{c}{\histo{ (0.00, 0.00) (10.00, 0.00) (20.00, 0.00) (30.00, 0.00) (40.00, 0.00) (50.00, 0.00) (60.00, 0.00) (70.00, 0.00) (80.00, 10.00) (90.00, 0.00) (100.00, 0.) }}  & %
\multicolumn{2}{c}{\histo{ (0.00, 0.00) (10.00, 0.00) (20.00, 1.00) (30.00, 0.00) (40.00, 3.00) (50.00, 1.00) (60.00, 3.00) (70.00, 1.00) (80.00, 1.00) (90.00, 0.00) (100.00, 0.) }}  & %
\multicolumn{2}{c}{\histo{ (0.00, 0.00) (10.00, 10.00) (20.00, 0.00) (30.00, 0.00) (40.00, 0.00) (50.00, 0.00) (60.00, 0.00) (70.00, 0.00) (80.00, 0.00) (90.00, 0.00) (100.00, 0.) }}  \\ %

OM-CTree & &
99.9 &$\pm$ 0.1 & %
93.0 &$\pm$ 7.5 & %
0.1 &$\pm$ 0.2 & %
40.3 &$\pm$ 22.5 \\ %
 & & 
\multicolumn{2}{c}{\histo{ (0.00, 0.00) (10.00, 0.00) (20.00, 0.00) (30.00, 0.00) (40.00, 0.00) (50.00, 0.00) (60.00, 0.00) (70.00, 0.00) (80.00, 0.00) (90.00, 10.00) (100.00, 0.) }}  & %
\multicolumn{2}{c}{\histo{ (0.00, 0.00) (10.00, 0.00) (20.00, 0.00) (30.00, 0.00) (40.00, 0.00) (50.00, 0.00) (60.00, 0.00) (70.00, 1.00) (80.00, 1.00) (90.00, 8.00) (100.00, 0.) }}  & %
\multicolumn{2}{c}{\histo{ (0.00, 10.00) (10.00, 0.00) (20.00, 0.00) (30.00, 0.00) (40.00, 0.00) (50.00, 0.00) (60.00, 0.00) (70.00, 0.00) (80.00, 0.00) (90.00, 0.00) (100.00, 0.) }}  & %
\multicolumn{2}{c}{\histo{ (0.00, 0.00) (10.00, 1.00) (20.00, 3.00) (30.00, 2.00) (40.00, 1.00) (50.00, 1.00) (60.00, 1.00) (70.00, 0.00) (80.00, 0.00) (90.00, 1.00) (100.00, 0.) }}  \\ %

OM-CTree + LA & &
100.0 & $\pm$ 0.0 & %
100.0 & $\pm$ 0.0 & %
80.1 & $\pm$ 17.3 & %
44.7 & $\pm$ 33.5 \\ %
  & & 
 \multicolumn{2}{c}{\histo{(0.00, 0.00) (10.00, 0.00) (20.00, 0.00) (30.00, 0.00) (40.00, 0.00) (50.00, 0.00) (60.00, 0.00) (70.00, 0.00) (80.00, 0.00) (90.00, 10.00) (100.00, 0.)}} & %
\multicolumn{2}{c}{\histo{(0.00, 0.00) (10.00, 0.00) (20.00, 0.00) (30.00, 0.00) (40.00, 0.00) (50.00, 0.00) (60.00, 0.00) (70.00, 0.00) (80.00, 0.00) (90.00, 10.00) (100.00, 0.)}} & %
\multicolumn{2}{c}{\histo{(0.00, 0.00) (10.00, 0.00) (20.00, 0.00) (30.00, 1.00) (40.00, 0.00) (50.00, 0.00) (60.00, 1.00) (70.00, 2.00) (80.00, 3.00) (90.00, 3.00) (100.00, 0.)}} & %
\multicolumn{2}{c}{\histo{(0.00, 0.00) (10.00, 5.00) (20.00, 1.00) (30.00, 1.00) (40.00, 0.00) (50.00, 0.00) (60.00, 0.00) (70.00, 0.00) (80.00, 1.00) (90.00, 2.00) (100.00, 0.)}} \\ %
\bottomrule
\end{tabular}
\end{sc}
\end{small}
\caption{Results on the different splits on the SCAN dataset. The labels are written in the format \textsc{Encoder}-\textsc{Decoder}.
\textsc{CTree + LA} is our decoder with lexical attention. Mean and standard deviation are over 10 runs.}\label{tab:scan}
\end{center}
\end{table*}

The SCAN dataset \cite{lake2017generalization} consists of a set of navigation commands as well as their corresponding action sequences.
As an example, an input of \texttt{\small jump opposite left and walk thrice} shoud yield \texttt{\small LTURN LTURN JUMP WALK WALK WALK}.
The dataset is designed as a test bed for examining the systematic generalization of neural models.
We follow the experiment settings in \citet{bastings2018jump}, where the different splits test for different properties of generalisation.
We apply our model to the 4 experimentation settings and compare our model with the baselines in the literature (See Table \ref{tab:scan}). 

The \textsc{Simple} split has the same data distribution for both the training set and test set.
The \textsc{Turn Left} split partitions the data so that while  \texttt{\small jump left}, and \texttt{\small turn right} would be examples present in the training set, \texttt{\small turn left} are not, but the model must be able to learn from these examples to produce \texttt{\small LTURN} when it sees \texttt{\small turn left} as input.

\paragraph{Lexical Attention} \label{appendix:LA}
\citet{li2019compositional} and \citet{russin2019compositional} propose a similar parameterization of the token output distribution based on key-value attention: the hidden states of the decoder (queries) attend on the hidden states of the encoder (keys), but only a-contextual word embeddings are used as values.
This allows the model to make one-to-one mappings between input token embeddings and output token embeddings (e.g., \texttt{\small jump} in the input always maps to \texttt{\small JUMP} in the output), resulting in huge improvements in performance on the~\textsc{Jump} split.
We refer to this method as \emph{lexical attention} (LA).

\begin{figure}[t]
\vskip -0.2in
\begin{center}
\treetablecell{walk opposite left after look left twice}{[[[[lturn] [look]] [[lturn] [look]]] [[lturn] [[lturn] [walk]]]]}
\caption{Example of a tree inferred by our model from SCAN.}
\label{fig:scantreespos}
\end{center}
\vskip -0.2in
\end{figure}

\paragraph{Results} We report results in Table 1. Our model performs well on the SCAN splits.
Figure \ref{fig:scantreespos} shows one tree induced from a model trained on \textsc{Simple}.
The resulting parses hint at the model learning to ``reuse'' some lower-level concepts when \texttt{\small twice} appears in the input, for instance. The two most challenging tasks are \textsc{Jump} and \textsc{Length} splits.
In \textsc{Jump}, the input token \texttt{\small jump} only appears alone during training and the model has to learn to use it in different contexts during testing.
Surprisingly, this model fails to generalise in the \textsc{Jump} split, suggesting that the capability of our model to perform well on the \textsc{Jump} split may be dependent on the hierarchical decoding as well as the leaf attention.

The \textsc{Length} split partitions the data so that the distribution of output sequences seen in the training set is much shorter than those seen in the test set.
Interestingly, our model converges to a solution that results in a 19.8\% accuracy in 5 out of the 10 random seeds we use.
In the other runs, the model achieves 25\% or higher, with 2 runs achieving $> 99\%$ accuracy.
The high variance of the model deserves more study, but we suspect in the failure cases, the model does not learn a meaningful concept of \texttt{\small thrice}.
Overall, \textsc{Length} requires some generalisation at the structural level during decoding, and has thus far been the most challenging for current sequential models.
Given the results, we believe our model has made some improvements on this front.

\subsection{English Question Formation}    
\citet{mccoy2020does} proposed linguistic synthetic tasks to test for hierarchical inductive biases in models.
One such task is the formation of English questions:  \texttt{\small the zebra does chuckle} $\to$ \texttt{\small does the zebra chuckle ?}.
It gets challenging when further relative clauses are inserted into the sentence: \texttt{\small your zebras that don't dance do chuckle}.
The heuristic that may work in the first case --- moving the first verb to the front of the sentence --- would fail, since the right output would be \texttt{\small do your zebras that don't dance chuckle ?}.
The task involves having two modes of generation, depending on the final token of the input sentence.
If it ends with \texttt{\small DECL}, the decoder simply has to copy the input.
If it ends with \texttt{\small QUEST}, the decoder has to produce the question.
The authors argue, and provide evidence, that the models that do this task well have syntactic structure.
Like SCAN, a generalisation set is included to test for out-of-distribution examples and only the first-word accuracy is reported for the generalisation set.

\paragraph{Results} Training our model on this task, we achieve comparable results to their models that are given the syntactic structure of the sentence, after considering the results of the sequential models that they used.
The results for this task are reported in Table \ref{table:engqn}.

\begin{table}[t]
\begin{center}
\begin{small}
\begin{sc}
\begin{tabular}{l c c}
\toprule
Model &  Full (Test) & First-word (Gen.) \\
\midrule
\multicolumn{3}{l}{\emph{Structure information given}} \\
Tree-Tree  & 0.96 & 0.99 \\
Seq-Tree  & 0.00 & 0.90 \\
Tree-Seq  & 0.96 & 0.13 \\
\midrule
\multicolumn{3}{l}{\emph{No structure information}} \\
Seq-Seq &  0.88 & 0.03 \\
Seq-CTree$^{\dagger *}$ & 1.00 $\pm$ 0.00 & 0.83 $\pm$ 0.19 \\
OM-CTree$^{\dagger *}$ & 1.00 $\pm$ 0.00 & 0.93 $\pm$ 0.07\\
\bottomrule
\end{tabular}
\end{sc}
\end{small}
\caption{English Question Formation results. Our models are annotated with $\dagger$, and we report mean and standard deviation over 5 runs.
Models that use attention are noted with *.}\label{table:engqn}
\end{center}
\end{table}

\subsection{Multi30k Translation}
The Multi30k English-German translation task \cite{multi30k}, is a corpus of short English-German sentence pairs.
The original dataset includes a picture for each pair, but we have excluded them to focus on the text translation task.
Our baseline models include an LSTM sequence-to-sequence with attention, Transformer \citep{vaswani2017attention}, and a non-autoregressive model LaNMT \citep{Shu2020LaNMT}.
For a fair comparison, we trained all models with negative log-likelihood loss or knowledge distillation \citep{kim2016sequence} if applicable.

\paragraph{Results} As shown in Table \ref{table:multi30k}, our model achieved comparable performance to its autoregressive counterparts, and outperforms the non-autoregressive model. 
However, we did not observe significant performance improvements as a result of the generalisation capabilities shown in the previous experiments. 
This suggests further study is needed to overcome remaining issues before deep learning models can really utilise productivity in language.

\begin{figure}[ht]
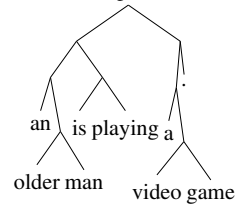

\begin{center}
\scalebox{0.85}{\treetablecell{Ein älterer Mann spielt ein Videospiel.}{[[[[an], [[older] [man]]], [[is] [playing]]], [[[a] [[video] [game]]] [.]]]}}
\caption{Example of a tree inferred by our model from Multi30K De-En.}
\label{fig:translationtreespos}
\end{center}
\end{figure}

On the other hand, examples in Figure \ref{fig:translationtreespos} shows our model does acquire some grammatical knowledge. 
The model tends to generate all noun phrases (e.g. \texttt{\small an older man}, \texttt{\small a video game}) in separate subtrees.
But it also tends to split the sentence before noun phrases. 
For example, the model splits the sub-clause \texttt{\small while in the air} into two different subtrees.
Similarly, previous latent tree induction models \citep{shen2017neural, shen2018ordered} also shows a higher affinity for noun phrases compared to adjective and prepositional phrases.

\begin{table}[t]
\begin{center}
\begin{small}
\begin{sc}
\begin{tabular}{l c c c c}
\toprule
& \multicolumn{2}{c}{En-De} & \multicolumn{2}{c}{De-En} \\
    & Param & Bleu & Param & Bleu \\
\midrule
transformer$^\dagger$ & 69M & 33.6 & 65M & 37.8 \\
LSTM$^\dagger$ & 34M & 35.2 & 30M & 38.0 \\
\midrule
\multicolumn{5}{l}{\emph{Non-autoregressive}} \\
LaNMT$^\ddagger$ & 96M & 26.6 & 96M & 27.9 \\
$+$ distill & 96M & 28.5 & 96M & 32.0 \\
OM-Ctree & 20M & 33.4 & 20M & 34.4 \\
$+$ distill & 20M & 34.7 & 20M & 36.6 \\
\bottomrule
\end{tabular}
\end{sc}
\end{small}
\caption{Multi30K results.
$\dagger$ --- Implemented by OpenNMT \citep{opennmt}. $\ddagger$ --- Trained and finetuned with the released code \url{https://github.com/zomux/lanmt}.}\label{table:multi30k}
\end{center}
\end{table}

\section{Conclusion}
In this paper, we propose a new algorithm for learning a latent structure for sequences of tokens.
Given the current interest in systematic generalisation and compositionality, we hope our work will lead to interesting avenues of research in this direction.

Firstly, the connectionist tree decoding framework allows for different architectural designs for the recurrent function used.
Secondly, while the dynamic programming algorithm is an improvement over a naive enumeration over different trees, there is room for improvement.
For one, exploiting the sparsity of the $\M(\cdot, \cdot)$ table can perhaps result in some memory and time gains.
Finally, the need to recursively expand to a complete tree results in exponential growth with respect to the input length.

These results, while preliminary, suggests that the method holds some potential.
The experimental results reveal some interesting behaviours that require further study.
Nevertheless, we demonstrate that it performs comparably to current algorithms, and surpasses current models in synthetic tasks that have been known to require structure in the models to perform well.
\newpage
\bibliography{comp.bib}

\begin{thebibliography}{52}
\expandafter\ifx\csname natexlab\endcsname\relax\def\natexlab#1{#1}\fi

\bibitem[{Alvarez-Melis and Jaakkola(2016)}]{alvarez2016tree}
David Alvarez-Melis and Tommi~S Jaakkola. 2016.
\newblock Tree-structured decoding with doubly-recurrent neural networks.

\bibitem[{Ba et~al.(2016)Ba, Kiros, and Hinton}]{ba2016layer}
Jimmy~Lei Ba, Jamie~Ryan Kiros, and Geoffrey~E Hinton. 2016.
\newblock Layer normalization.
\newblock \emph{arXiv preprint arXiv:1607.06450}.

\bibitem[{Baker(1979)}]{baker1979trainable}
James~K Baker. 1979.
\newblock Trainable grammars for speech recognition.
\newblock \emph{The Journal of the Acoustical Society of America},
  65(S1):S132--S132.

\bibitem[{Bastings et~al.(2018)Bastings, Baroni, Weston, Cho, and
  Kiela}]{bastings2018jump}
Joost Bastings, Marco Baroni, Jason Weston, Kyunghyun Cho, and Douwe Kiela.
  2018.
\newblock Jump to better conclusions: Scan both left and right.
\newblock In \emph{Proceedings of the 2018 EMNLP Workshop BlackboxNLP:
  Analyzing and Interpreting Neural Networks for NLP}, pages 47--55.

\bibitem[{Baum and Eagon(1967)}]{baum1967inequality}
Leonard~E Baum and John~Alonzo Eagon. 1967.
\newblock An inequality with applications to statistical estimation for
  probabilistic functions of markov processes and to a model for ecology.
\newblock \emph{Bulletin of the American Mathematical Society}, 73(3):360--363.

\bibitem[{Baum and Sell(1968)}]{baum1968growth}
Leonard~E Baum and George Sell. 1968.
\newblock Growth transformations for functions on manifolds.
\newblock \emph{Pacific Journal of Mathematics}, 27(2):211--227.

\bibitem[{Bod(2006)}]{bod2006all}
Rens Bod. 2006.
\newblock An all-subtrees approach to unsupervised parsing.
\newblock In \emph{Proceedings of the 21st International Conference on
  Computational Linguistics and the 44th annual meeting of the Association for
  Computational Linguistics}, pages 865--872. Association for Computational
  Linguistics.

\bibitem[{Bowman et~al.(2015)Bowman, Angeli, Potts, and
  Manning}]{bowman2015large}
Samuel~R Bowman, Gabor Angeli, Christopher Potts, and Christopher~D Manning.
  2015.
\newblock A large annotated corpus for learning natural language inference.
\newblock \emph{arXiv preprint arXiv:1508.05326}.

\bibitem[{Chen et~al.(2018)Chen, Liu, and Song}]{chen2018tree}
Xinyun Chen, Chang Liu, and Dawn Song. 2018.
\newblock Tree-to-tree neural networks for program translation.
\newblock In \emph{Advances in neural information processing systems}, pages
  2547--2557.

\bibitem[{Cho et~al.(2019)Cho, Daum{\'e}~III, Welleck et~al.}]{cho2019non}
Kyunghyun Cho, Hal Daum{\'e}~III, Sean Welleck, et~al. 2019.
\newblock Non-monotonic sequential text generation.
\newblock In \emph{Proceedings of the 2019 Workshop on Widening NLP}, pages
  57--59.

\bibitem[{Cho et~al.(2014)Cho, Van~Merri{\"e}nboer, Bahdanau, and
  Bengio}]{cho2014properties}
Kyunghyun Cho, Bart Van~Merri{\"e}nboer, Dzmitry Bahdanau, and Yoshua Bengio.
  2014.
\newblock On the properties of neural machine translation: Encoder-decoder
  approaches.
\newblock \emph{arXiv preprint arXiv:1409.1259}.

\bibitem[{Chung et~al.(2016)Chung, Ahn, and Bengio}]{chung2016hierarchical}
Junyoung Chung, Sungjin Ahn, and Yoshua Bengio. 2016.
\newblock Hierarchical multiscale recurrent neural networks.
\newblock \emph{arXiv preprint arXiv:1609.01704}.

\bibitem[{Du and Black(2019)}]{du2019top}
Wenchao Du and Alan~W Black. 2019.
\newblock Top-down structurally-constrained neural response generation with
  lexicalized probabilistic context-free grammar.
\newblock In \emph{Proceedings of the 2019 Conference of the North American
  Chapter of the Association for Computational Linguistics: Human Language
  Technologies, Volume 1 (Long and Short Papers)}, pages 3762--3771.

\bibitem[{Du et~al.(2020)Du, Lin, Shen, O'Donnell, Bengio, and
  Zhang}]{du.w:2020}
Wenyu Du, Zhouhan Lin, Yikang Shen, Timothy~J. O'Donnell, Yoshua Bengio, and
  Yue Zhang. 2020.
\newblock Exploiting syntactic structure for better language modeling: A
  syntactic distance approach.
\newblock In \emph{Proceedings of the 58th Annual Meeting of the Association
  for Computational Linguistics}, Seattle, Washington.

\bibitem[{Dyer et~al.(2016)Dyer, Kuncoro, Ballesteros, and
  Smith}]{dyer2016recurrent}
Chris Dyer, Adhiguna Kuncoro, Miguel Ballesteros, and Noah~A Smith. 2016.
\newblock Recurrent neural network grammars.
\newblock In \emph{Proceedings of the 2016 Conference of the North American
  Chapter of the Association for Computational Linguistics: Human Language
  Technologies}, pages 199--209.

\bibitem[{Elliott et~al.(2016)Elliott, Frank, Sima'an, and Specia}]{multi30k}
Desmond Elliott, Stella Frank, Khalil Sima'an, and Lucia Specia. 2016.
\newblock \href {https://doi.org/10.18653/v1/W16-3210} {Multi30k: Multilingual
  {E}nglish-{G}erman image descriptions}.
\newblock In \emph{Proceedings of the 5th Workshop on Vision and Language},
  pages 70--74. Association for Computational Linguistics.

\bibitem[{Eriguchi et~al.(2016)Eriguchi, Hashimoto, and
  Tsuruoka}]{eriguchi2016tree}
Akiko Eriguchi, Kazuma Hashimoto, and Yoshimasa Tsuruoka. 2016.
\newblock Tree-to-sequence attentional neural machine translation.
\newblock In \emph{Proceedings of the 54th Annual Meeting of the Association
  for Computational Linguistics (Volume 1: Long Papers)}, pages 823--833.

\bibitem[{Ghahramani et~al.(2010)Ghahramani, Jordan, and
  Adams}]{ghahramani2010tree}
Zoubin Ghahramani, Michael~I Jordan, and Ryan~P Adams. 2010.
\newblock Tree-structured stick breaking for hierarchical data.
\newblock In \emph{Advances in neural information processing systems}, pages
  19--27.

\bibitem[{Graves et~al.(2006)Graves, Fern{\'a}ndez, Gomez, and
  Schmidhuber}]{graves2006connectionist}
Alex Graves, Santiago Fern{\'a}ndez, Faustino Gomez, and J{\"u}rgen
  Schmidhuber. 2006.
\newblock Connectionist temporal classification: labelling unsegmented sequence
  data with recurrent neural networks.
\newblock In \emph{Proceedings of the 23rd international conference on Machine
  learning}, pages 369--376.

\bibitem[{Griffiths et~al.(2004)Griffiths, Jordan, Tenenbaum, and
  Blei}]{griffiths2004hierarchical}
Thomas~L Griffiths, Michael~I Jordan, Joshua~B Tenenbaum, and David~M Blei.
  2004.
\newblock Hierarchical topic models and the nested chinese restaurant process.
\newblock In \emph{Advances in neural information processing systems}, pages
  17--24.

\bibitem[{G{\=u} et~al.(2018)G{\=u}, Shavarani, and Sarkar}]{gu-etal-2018-top}
Jetic G{\=u}, Hassan~S. Shavarani, and Anoop Sarkar. 2018.
\newblock \href {https://doi.org/10.18653/v1/D18-1037} {Top-down tree
  structured decoding with syntactic connections for neural machine translation
  and parsing}.
\newblock In \emph{Proceedings of the 2018 Conference on Empirical Methods in
  Natural Language Processing}, pages 401--413, Brussels, Belgium. Association
  for Computational Linguistics.

\bibitem[{Gu et~al.(2019)Gu, Liu, and Cho}]{gu2019insertion}
Jiatao Gu, Qi~Liu, and Kyunghyun Cho. 2019.
\newblock Insertion-based decoding with automatically inferred generation
  order.
\newblock \emph{Transactions of the Association for Computational Linguistics},
  7:661--676.

\bibitem[{Hochreiter and Schmidhuber(1997)}]{hochreiter1997long}
Sepp Hochreiter and J{\"u}rgen Schmidhuber. 1997.
\newblock Long short-term memory.
\newblock \emph{Neural computation}, 9(8):1735--1780.

\bibitem[{Jacob et~al.(2018)Jacob, Lin, Sordoni, and
  Bengio}]{jacob2018learning}
Athul~Paul Jacob, Zhouhan Lin, Alessandro Sordoni, and Yoshua Bengio. 2018.
\newblock Learning hierarchical structures on-the-fly with a
  recurrent-recursive model for sequences.
\newblock In \emph{Proceedings of The Third Workshop on Representation Learning
  for NLP}, pages 154--158.

\bibitem[{Kim et~al.(2019{\natexlab{a}})Kim, Dyer, and Rush}]{kim2019compound}
Yoon Kim, Chris Dyer, and Alexander~M Rush. 2019{\natexlab{a}}.
\newblock Compound probabilistic context-free grammars for grammar induction.
\newblock In \emph{Proceedings of the 57th Annual Meeting of the Association
  for Computational Linguistics}, pages 2369--2385.

\bibitem[{Kim and Rush(2016)}]{kim2016sequence}
Yoon Kim and Alexander~M Rush. 2016.
\newblock Sequence-level knowledge distillation.
\newblock \emph{arXiv preprint arXiv:1606.07947}.

\bibitem[{Kim et~al.(2019{\natexlab{b}})Kim, Rush, Yu, Kuncoro, Dyer, and
  Melis}]{kim2019unsupervised}
Yoon Kim, Alexander~M Rush, Lei Yu, Adhiguna Kuncoro, Chris Dyer, and G{\'a}bor
  Melis. 2019{\natexlab{b}}.
\newblock Unsupervised recurrent neural network grammars.
\newblock \emph{arXiv preprint arXiv:1904.03746}.

\bibitem[{Kingma and Welling(2013)}]{kingma2013auto}
Diederik~P Kingma and Max Welling. 2013.
\newblock Auto-encoding variational bayes.
\newblock \emph{arXiv preprint arXiv:1312.6114}.

\bibitem[{Klein and Manning(2001)}]{klein2001natural}
Dan Klein and Christopher~D Manning. 2001.
\newblock Natural language grammar induction using a constituent-context model.
\newblock In \emph{Proceedings of the 14th International Conference on Neural
  Information Processing Systems: Natural and Synthetic}, pages 35--42.

\bibitem[{Klein and Manning(2005)}]{klein2005natural}
Dan Klein and Christopher~D Manning. 2005.
\newblock Natural language grammar induction with a generative
  constituent-context model.
\newblock \emph{Pattern recognition}, 38(9):1407--1419.

\bibitem[{Klein et~al.(2017)Klein, Kim, Deng, Senellart, and Rush}]{opennmt}
Guillaume Klein, Yoon Kim, Yuntian Deng, Jean Senellart, and Alexander~M. Rush.
  2017.
\newblock \href {https://doi.org/10.18653/v1/P17-4012} {Open{NMT}: Open-source
  toolkit for neural machine translation}.
\newblock In \emph{Proc. ACL}.

\bibitem[{Lake and Baroni(2017)}]{lake2017generalization}
Brenden~M Lake and Marco Baroni. 2017.
\newblock Generalization without systematicity: On the compositional skills of
  sequence-to-sequence recurrent networks.
\newblock \emph{arXiv preprint arXiv:1711.00350}.

\bibitem[{Li et~al.(2019)Li, Zhao, Wang, and Hestness}]{li2019compositional}
Yuanpeng Li, Liang Zhao, Jianyu Wang, and Joel Hestness. 2019.
\newblock Compositional generalization for primitive substitutions.
\newblock In \emph{Proceedings of the 2019 Conference on Empirical Methods in
  Natural Language Processing and the 9th International Joint Conference on
  Natural Language Processing (EMNLP-IJCNLP)}, pages 4284--4293.

\bibitem[{McCoy et~al.(2020)McCoy, Frank, and Linzen}]{mccoy2020does}
R~Thomas McCoy, Robert Frank, and Tal Linzen. 2020.
\newblock Does syntax need to grow on trees? sources of hierarchical inductive
  bias in sequence-to-sequence networks.
\newblock \emph{arXiv preprint arXiv:2001.03632}.

\bibitem[{Mochihashi and Sumita(2008)}]{mochihashi2008infinite}
Daichi Mochihashi and Eiichiro Sumita. 2008.
\newblock The infinite markov model.
\newblock In \emph{Advances in neural information processing systems}, pages
  1017--1024.

\bibitem[{Russin et~al.(2019)Russin, Jo, and
  O'Reilly}]{russin2019compositional}
Jake Russin, Jason Jo, and Randall~C O'Reilly. 2019.
\newblock Compositional generalization in a deep seq2seq model by separating
  syntax and semantics.
\newblock \emph{arXiv preprint arXiv:1904.09708}.

\bibitem[{Sethuraman(1994)}]{sethuraman1994constructive}
Jayaram Sethuraman. 1994.
\newblock A constructive definition of dirichlet priors.
\newblock \emph{Statistica sinica}, pages 639--650.

\bibitem[{Shen et~al.(2017)Shen, Lin, Huang, and Courville}]{shen2017neural}
Yikang Shen, Zhouhan Lin, Chin-Wei Huang, and Aaron Courville. 2017.
\newblock Neural language modeling by jointly learning syntax and lexicon.
\newblock \emph{arXiv preprint arXiv:1711.02013}.

\bibitem[{Shen et~al.(2019)Shen, Tan, Hosseini, Lin, Sordoni, and
  Courville}]{shen2019ordered}
Yikang Shen, Shawn Tan, Arian Hosseini, Zhouhan Lin, Alessandro Sordoni, and
  Aaron~C Courville. 2019.
\newblock Ordered memory.
\newblock In \emph{Advances in Neural Information Processing Systems}, pages
  5038--5049.

\bibitem[{Shen et~al.(2018)Shen, Tan, Sordoni, and Courville}]{shen2018ordered}
Yikang Shen, Shawn Tan, Alessandro Sordoni, and Aaron Courville. 2018.
\newblock Ordered neurons: Integrating tree structures into recurrent neural
  networks.
\newblock \emph{arXiv preprint arXiv:1810.09536}.

\bibitem[{Shu et~al.(2020)Shu, Lee, Nakayama, and Cho}]{Shu2020LaNMT}
Raphael Shu, Jason Lee, Hideki Nakayama, and Kyunghyun Cho. 2020.
\newblock Latent-variable non-autoregressive neural machine translation with
  deterministic inference using a delta posterior.
\newblock \emph{AAAI}.

\bibitem[{Socher et~al.(2010)Socher, Manning, and Ng}]{socher2010learning}
Richard Socher, Christopher~D Manning, and Andrew~Y Ng. 2010.
\newblock Learning continuous phrase representations and syntactic parsing with
  recursive neural networks.
\newblock In \emph{Proceedings of the NIPS-2010 Deep Learning and Unsupervised
  Feature Learning Workshop}, volume 2010, pages 1--9.

\bibitem[{Socher et~al.(2013)Socher, Perelygin, Wu, Chuang, Manning, Ng, and
  Potts}]{socher2013recursive}
Richard Socher, Alex Perelygin, Jean Wu, Jason Chuang, Christopher~D Manning,
  Andrew Ng, and Christopher Potts. 2013.
\newblock Recursive deep models for semantic compositionality over a sentiment
  treebank.
\newblock In \emph{Proceedings of the 2013 conference on empirical methods in
  natural language processing}, pages 1631--1642.

\bibitem[{Stern et~al.(2019)Stern, Chan, Kiros, and
  Uszkoreit}]{stern2019insertion}
Mitchell Stern, William Chan, Jamie Kiros, and Jakob Uszkoreit. 2019.
\newblock Insertion transformer: Flexible sequence generation via insertion
  operations.
\newblock In \emph{International Conference on Machine Learning}, pages
  5976--5985.

\bibitem[{Tucker et~al.(2017)Tucker, Mnih, Maddison, Lawson, and
  Sohl-Dickstein}]{tucker2017rebar}
George Tucker, Andriy Mnih, Chris~J Maddison, John Lawson, and Jascha
  Sohl-Dickstein. 2017.
\newblock Rebar: Low-variance, unbiased gradient estimates for discrete latent
  variable models.
\newblock In \emph{Advances in Neural Information Processing Systems}, pages
  2627--2636.

\bibitem[{Vaswani et~al.(2017)Vaswani, Shazeer, Parmar, Uszkoreit, Jones,
  Gomez, Kaiser, and Polosukhin}]{vaswani2017attention}
Ashish Vaswani, Noam Shazeer, Niki Parmar, Jakob Uszkoreit, Llion Jones,
  Aidan~N Gomez, {\L}ukasz Kaiser, and Illia Polosukhin. 2017.
\newblock Attention is all you need.
\newblock In \emph{Advances in Neural Information Processing Systems}, pages
  5998--6008.

\bibitem[{Viterbi(1967)}]{viterbi1967error}
Andrew Viterbi. 1967.
\newblock Error bounds for convolutional codes and an asymptotically optimum
  decoding algorithm.
\newblock \emph{IEEE transactions on Information Theory}, 13(2):260--269.

\bibitem[{Williams et~al.(2018)Williams, Drozdov*, and
  Bowman}]{williams2018latent}
Adina Williams, Andrew Drozdov*, and Samuel~R Bowman. 2018.
\newblock Do latent tree learning models identify meaningful structure in
  sentences?
\newblock \emph{Transactions of the Association of Computational Linguistics},
  6:253--267.

\bibitem[{Williams(1992)}]{williams1992simple}
Ronald~J Williams. 1992.
\newblock Simple statistical gradient-following algorithms for connectionist
  reinforcement learning.
\newblock \emph{Machine learning}, 8(3-4):229--256.

\bibitem[{Yang et~al.(2019)Yang, Dai, Yang, Carbonell, Salakhutdinov, and
  Le}]{yang2019xlnet}
Zhilin Yang, Zihang Dai, Yiming Yang, Jaime Carbonell, Russ~R Salakhutdinov,
  and Quoc~V Le. 2019.
\newblock Xlnet: Generalized autoregressive pretraining for language
  understanding.
\newblock In \emph{Advances in neural information processing systems}, pages
  5754--5764.

\bibitem[{Zhang et~al.(2015{\natexlab{a}})Zhang, Lu, and Lapata}]{zhang2015top}
Xingxing Zhang, Liang Lu, and Mirella Lapata. 2015{\natexlab{a}}.
\newblock Top-down tree long short-term memory networks.
\newblock \emph{arXiv preprint arXiv:1511.00060}.

\bibitem[{Zhang et~al.(2015{\natexlab{b}})Zhang, Lu, and
  Lapata}]{zhang2015tree}
Xingxing Zhang, Liang Lu, and Mirella Lapata. 2015{\natexlab{b}}.
\newblock Tree recurrent neural networks with application to language modeling.
\newblock \emph{CoRR, abs/1511.00060}.

\end{thebibliography}
\bibliographystyle{acl_natbib}
\newpage
\onecolumn
\appendix
\section{Proofs} \label{sec:proof}
In this context, all trees are rooted.

\begin{definition}\label{def:full}
A \textbf{full binary tree} is a tree where each vertex has either 0 or 2 children.
\end{definition}
\begin{definition}
A \textbf{complete binary tree} $\paramtree$ is a tree where each vertex that is not a leaf has 2 children.
\end{definition}
\begin{definition}\label{def:internal}
An \textbf{internal tree} $\tree$ of a complete binary tree $\paramtree$ is a full binary tree $\tree$ such that $\rootv(\tree) = \rootv(\paramtree)$ and whose vertices and edges are a subset of $\paramtree$.
\end{definition}
\begin{definition}
The set $\treeset(\paramtree)$ of all internal trees of $\paramtree$.
\end{definition}
\begin{definition}\label{def:leafset}
$L(\tree)$ is the ordered set of all leaf nodes in $\tree$, starting from the left-most leaf to the right-most leaf.
Given a left and right subtree $\tree'$ and $\tree''$ of the tree $\tree$,
\begin{align*}
    L(\tree) = [L(\tree'); L(\tree'')]
\end{align*}
\end{definition}

\begin{definition}
\textbf{Left-most leaf}  is $L_1(\tree)$ and the  \textbf{right-most leaf}  is $L_{|L(\tree)|}(\tree)$
\end{definition}
\begin{definition}\label{def:succleaf}
\textbf{Successive leaf transitions} are pairs of vertices $(\vrtx_i, \vrtx_j)$, 
\begin{align*}
\begin{split}
    \succleave(\paramtree) = \smashoperator{\bigcup_{\tree \in \treeset(\paramtree)}} \left\{   (L_\trgstep(\tree),  L_{\trgstep+1}(\tree))  ~:~   1 \leq \trgstep < |L(\tree)| \right\}
\end{split}
\end{align*}
where $L_n(\tree)$ is the $n$-th leaf of $\tree$
\end{definition}

\begin{definition}
A \textbf{left boundary} $B_l(\tree)$ of a tree is the set of vertices induced by recursively visiting the left vertex from the $\rootv$.
$$B_l(\tree) = \left\{ \vrtx ~:~ \vrtx = \leftc^k(\rootv), k > 1 \right\} \cup \{\rootv\}$$
The notion is similarly defined for the \textbf{right boundary} $B_r$.
\end{definition}
\begin{definition}
The probability $p(\tree) = \pi(\rootv)$, where $\pi$ is defined recursively as:
\begin{align*}
    \recurse(\vrtx_i) = 
    \left\{\begin{array}{ll}
    \leafp_i      & \textrm{if $\vrtx_i \in \alignment(\tree)$,} \\
    & \\
    (1 - \leafp_i)~\cdot  &  \\
    \quad \recurse(\leftc(\vrtx_i))~\cdot &   \textrm{else}\\
    \qquad \recurse(\rightc(\vrtx_i)) & 
    \end{array}\right.
\end{align*}
where $\leftc(\vrtx_i)$ and $\rightc(\vrtx_i)$ are the left child and right child respectively.
\end{definition}
\begin{proposition}
If $\tree'$ and $\tree''$ are the left and right subtrees of $\tree$ respectively, and $\paramtree'$ and $\paramtree''$ are subtrees of $\paramtree$,
then 
$$\tree \in \treeset(\paramtree) \rightarrow \tree' \in \treeset(\paramtree'), \tree'' \in \treeset(\paramtree'')$$
\end{proposition}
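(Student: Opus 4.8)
The plan is to unwind Definition~\ref{def:internal} directly, verifying each of the three defining properties of an internal tree for $\tree'$; the argument for $\tree''$ is symmetric. Throughout I will use that $\paramtree$ carries a fixed left/right child labelling and that, since $\tree \in \treeset(\paramtree)$, the edge set of $\tree$ is a subset of the edge set of $\paramtree$ while $\rootv(\tree) = \rootv(\paramtree)$.

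First I would identify the roots. By hypothesis $\tree$ has a left and a right subtree, so $\rootv(\tree)$ has exactly two children in $\tree$, and $\tree'$ is the subtree hanging off the left one. The two edges incident to $\rootv(\tree)$ in $\tree$ are a subset of those incident to $\rootv(\tree) = \rootv(\paramtree)$ in $\paramtree$, of which there are exactly two, namely the edges to $\leftc(\rootv(\paramtree))$ and $\rightc(\rootv(\paramtree))$; hence these are precisely the two children of $\rootv(\tree)$, the left one inheriting the ``left'' designation. Therefore $\rootv(\tree') = \leftc(\rootv(\tree)) = \leftc(\rootv(\paramtree)) = \rootv(\paramtree')$, and symmetrically $\rootv(\tree'') = \rootv(\paramtree'')$.

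Next I would check that $\tree'$ is a full binary tree. Every vertex of $\tree'$ is a vertex of $\tree$, and the children of such a vertex taken within $\tree'$ are exactly its children within $\tree$ --- forming the subtree rooted at $\rootv(\tree')$ only discards vertices that are not descendants of $\rootv(\tree')$, which cannot be children of a vertex already inside $\tree'$. Since $\tree$ is full, every vertex has $0$ or $2$ children in $\tree$, hence in $\tree'$, so $\tree'$ is full. Finally I would verify the subset condition: a vertex $\vrtx$ of $\tree'$ is $\rootv(\tree')$ or one of its descendants in $\tree$, so the unique $\tree$-path from $\rootv(\tree')$ to $\vrtx$ uses only edges of $\tree$, which are edges of $\paramtree$; this same path in $\paramtree$ exhibits $\vrtx$ as a descendant of $\rootv(\tree') = \rootv(\paramtree')$, i.e. as a vertex of $\paramtree'$, and it exhibits every edge of $\tree'$ as an edge of $\paramtree$ between two vertices of $\paramtree'$, hence an edge of $\paramtree'$. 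The three checks together give $\tree' \in \treeset(\paramtree')$, and the mirror argument gives $\tree'' \in \treeset(\paramtree'')$.

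I expect the only real obstacle to be definitional bookkeeping rather than mathematics: one must be careful that ``left subtree'', ``subtree of $\paramtree$'', and the descendant relation are all read off consistently from the ambient left/right labelling of $\paramtree$, and that the edge-subset clause of Definition~\ref{def:internal} is precisely what licenses transporting paths --- and therefore the descendant relation --- from $\tree$ into $\paramtree$. Once that correspondence is pinned down, each of the three properties is a one-line verification.
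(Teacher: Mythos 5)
Your proof is correct and follows essentially the same route as the paper's: identify $\rootv(\tree') = \leftc(\rootv(\tree)) = \leftc(\rootv(\paramtree)) = \rootv(\paramtree')$ and then verify the vertex/edge-subset condition of Definition~\ref{def:internal}. You simply spell out the fullness check and the edge-subset transport that the paper's much terser proof leaves implicit.
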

\begin{proof}
\begin{align*}
\rootv(\paramtree) &= \rootv(\tree) \\
\leftc(\rootv(\tree)) &= \rootv(\tree') \\
                    &= \leftc(\rootv(\paramtree))  = \rootv(\paramtree')
\end{align*}
Since the vertices of $\tree'$ and $\tree''$ are subsets of vertices of $\paramtree'$ and $\paramtree''$  respectively, they are each internal trees of $\paramtree'$ and $\paramtree''$.
Therefore $\tree' \in \treeset(\paramtree'), \tree'' \in \treeset(\paramtree'')$
\end{proof}
\begin{proposition}
If for all $v_i \in L(\paramtree) \rightarrow \leafp_i = 1$, then $$\sum_{\tree \in \treeset(\paramtree)} p(\tree) = 1$$ 
\end{proposition}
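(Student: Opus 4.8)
The plan is to prove the identity by induction on the depth $\depth$ of the complete binary tree $\paramtree$, decomposing each internal tree at its root with the help of the preceding proposition. For the base case $\depth = 0$, $\paramtree$ is a single vertex that is simultaneously its root and its only leaf, so $\treeset(\paramtree)$ consists of the lone one-vertex internal tree $\tree$; by the recursive definition of $\recurse$ and the hypothesis (which forces $\leafp_{\rootv} = 1$ since $\rootv \in \alignment(\paramtree)$), $p(\tree) = \recurse(\rootv) = \leafp_{\rootv} = 1$, and the sum equals $1$.

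For the inductive step, suppose $\depth \ge 1$, so $\rootv$ has two children and left/right subtrees $\paramtree'$, $\paramtree''$, each a complete binary tree of depth $\depth - 1$ whose leaves are among the leaves of $\paramtree$; hence the hypothesis is inherited by both subtrees and the induction hypothesis gives $\sum_{\tree' \in \treeset(\paramtree')} p(\tree') = 1$ and $\sum_{\tree'' \in \treeset(\paramtree'')} p(\tree'') = 1$. I then split $\treeset(\paramtree)$ into the unique internal tree in which $\rootv$ is a leaf, contributing $\recurse(\rootv) = \leafp_{\rootv}$, and those in which $\rootv$ is not a leaf. For the latter, the map sending an internal tree to its pair of subtrees $(\tree', \tree'')$ is well-defined into $\treeset(\paramtree') \times \treeset(\paramtree'')$ by the preceding proposition, and has an explicit inverse: glue a given pair under a fresh root identified with $\rootv(\paramtree)$, which yields a full binary tree with root $\rootv(\paramtree)$ and vertices and edges contained in $\paramtree$, i.e.\ a member of $\treeset(\paramtree)$ whose root is not a leaf; so this is a bijection. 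Along it, the recursive definition of $\recurse$ gives $p(\tree) = (1 - \leafp_{\rootv})\,p(\tree')\,p(\tree'')$, because the recursion at any vertex of the left subtree only descends through $\leftc(\rootv)$ and therefore coincides with $p(\tree')$ as computed inside $\paramtree'$ (and symmetrically on the right). Summing over the bijection,
\begin{align*}
\sum_{\tree \in \treeset(\paramtree)} p(\tree)
&= \leafp_{\rootv} + (1 - \leafp_{\rootv}) \Big(\sum_{\tree' \in \treeset(\paramtree')} p(\tree')\Big)\Big(\sum_{\tree'' \in \treeset(\paramtree'')} p(\tree'')\Big) \\
&= \leafp_{\rootv} + (1 - \leafp_{\rootv}) = 1,
\end{align*}
which closes the induction.

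The base case and the final line of algebra are routine; the step that needs genuine care is the root-decomposition --- establishing that $\tree \mapsto (\tree', \tree'')$ is an honest bijection between the internal trees of $\paramtree$ whose root splits and $\treeset(\paramtree') \times \treeset(\paramtree'')$, and, hand in hand with it, the multiplicativity $p(\tree) = (1 - \leafp_{\rootv})\,p(\tree')\,p(\tree'')$, i.e.\ that restricting the global recursion $\recurse$ to a subtree reproduces exactly the probability that subtree receives as an internal tree of $\paramtree'$ (resp.\ $\paramtree''$). Once that is in place, the induction hypothesis plugs in directly and the claim follows.
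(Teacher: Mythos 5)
Your proof is correct and follows essentially the same route as the paper's: induction on the depth of $\paramtree$, splitting off the unique internal tree in which $\rootv$ is a leaf and factoring the remaining sum as $(1-\leafp_{\rootv})$ times the product of the two subtree sums. You are somewhat more explicit than the paper about the bijection $\tree \mapsto (\tree',\tree'')$ and the multiplicativity of $\recurse$ across the root, which the paper leaves implicit, but the argument is the same.
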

\begin{proof}
Base case: $\paramtree$ is is of depth 0, then $\treeset(\paramtree) = \{\tree\}$, where $\tree = \paramtree = \rootv$., and since $\rootv$ is a leaf $\leafp = 1$.

Inductive case: Let the left and right subtrees of $\paramtree$ be $\paramtree'$ and $\paramtree''$ respectively, and assume $\sum_{\tree \in \treeset(\paramtree')} p(\tree) = 1$, and same for $\paramtree''$
\begin{align*}
\MoveEqLeft
 \sum_{\tree \in \treeset(\paramtree)} p(\tree) \\
    &= \leafp_{\rootv} +  \sum_{\tree \in (\treeset(\paramtree) \setminus \{\rootv\}} p(\tree) \\
    \intertext{Second term has common factor, since $\rootv$ is not a leaf, }
    &= \leafp_{\rootv} + (1 - \leafp_{\rootv})\smashoperator{\sum_{\substack{\tree' \in \treeset(\paramtree')\\ \tree'' \in \treeset(\paramtree'')}}} \pi(\rootv(\tree')) \cdot \pi(\rootv(\tree'')) \\
    &=     \leafp_{\rootv} + (1 - \leafp_{\rootv}) \smashoperator{\sum_{\substack{\tree' \in \treeset(\paramtree')\\ \tree'' \in \treeset(\paramtree'')}}} p(\tree') \cdot p(\tree'') \\
    &= \leafp_{\rootv} + (1 - \leafp_{\rootv}) 
    \left(\smashoperator[r]{\sum_{\tree' \in \treeset(\paramtree')}}p(\tree') \right)
    \left(\smashoperator[r]{\sum_{\tree'' \in \treeset(\paramtree'')}}  p(\tree'') \right)
    \intertext{By the inductive assumption,}
    &= \leafp_{\rootv} + (1 - \leafp_{\rootv})  \cdot 1 \cdot 1 \\
    &= 1
\end{align*}
\end{proof}
\begin{proposition}
Let
\begin{align*}
    \m(v_i) &= \left(\negm(\parent(v_i))\right)^{\frac{1}{2}} \cdot l_i \\
    \negm(v_i) &= \left(\negm(\parent(\vrtx_i))\right)^{\frac{1}{2}} \cdot (1 - l_i)
\end{align*}
then,
\begin{align*}
    p(\tree) = \prod_{\trgstep=1}^\trglength \m(\alignment_\trgstep(\tree))
\end{align*}
\end{proposition}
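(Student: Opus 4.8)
The plan is to prove the identity $p(\tree) = \prod_{\trgstep=1}^\trglength \m(\alignment_\trgstep(\tree))$ by showing that the memoized value $\m(\vrtx_i)$ defined by the top-down recurrence is exactly the product of $\leafp_i$ with the appropriate fractional powers of the $(1-\leafp_j)$ factors collected along the path to the root, and that these fractional powers reassemble correctly into the full product $\recurse(\rootv)$ when one multiplies over the leaves of $\tree$. Concretely, I would first establish the closed form
\[
\m(\vrtx_i) = \leafp_i \prod_{\vrtx_j \in V_{i \rightarrow \rootv}} (1 - \leafp_j)^{1/2^{|V_{i \rightarrow j}|}},
\]
where the product is over the \emph{interior} ancestors of $\vrtx_i$ (those $\vrtx_j$ that are not leaves of $\tree$; note that for $\vrtx_i$ itself, which is a leaf, the factor $(1-\leafp_i)$ does not appear, only $\leafp_i$). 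This follows by a straightforward induction on the depth of $\vrtx_i$ in $\paramtree$: the base case is $\m(\rootv) = \negm(\parent(\rootv))^{1/2}\cdot\leafp_{\rootv} = \leafp_{\rootv}$ since $\negm(\parent(\rootv)) = 1$, and the inductive step uses $\negm(\vrtx_i) = \negm(\parent(\vrtx_i))^{1/2}(1-\leafp_i)$ together with $\m(\vrtx_i) = \negm(\parent(\vrtx_i))^{1/2}\leafp_i$, so that taking a square root of $\negm$ halves every exponent in the product and appends the new ancestor's factor with exponent $1$ — i.e. $2^{0}$ relative to itself.

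Next I would compute $\prod_{\trgstep=1}^\trglength \m(\alignment_\trgstep(\tree))$ by substituting the closed form. Each leaf contributes its $\leafp$ factor, which together give $\prod_{\vrtx \in \alignment(\tree)} \leafp_\vrtx$; this matches the product of the $\leafp$-terms produced by unfolding $\recurse$. For the $(1-\leafp_j)$ terms: a fixed interior vertex $\vrtx_j$ of $\tree$ appears in the path-to-root product of leaf $\alignment_\trgstep(\tree)$ precisely when $\vrtx_j$ is an ancestor of that leaf, contributing exponent $2^{-|V_{j \rightarrow \alignment_\trgstep(\tree)}|}$, i.e. $2^{-k}$ where $k$ is the number of edges between $\vrtx_j$ and the leaf. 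The key combinatorial fact is then that in a full binary tree, the leaves lying below $\vrtx_j$, grouped by their distance $k$ from $\vrtx_j$, satisfy $\sum_k (\text{number of leaves of }\tree\text{ at distance }k\text{ below }\vrtx_j)\cdot 2^{-k} = 1$. This is exactly Kraft's equality for a complete/full prefix code on the subtree rooted at $\vrtx_j$, and it can itself be proved by an easy induction on subtree structure (leaf: the single leaf at distance $0$ gives $2^0 = 1$; internal node: each child-subtree contributes a sum equal to $1$ at "its" scale, and the $2^{-1}$ from descending one edge makes $\tfrac12 + \tfrac12 = 1$). Hence the total exponent of $(1-\leafp_j)$ in $\prod_\trgstep \m(\alignment_\trgstep(\tree))$ is $1$ for every interior $\vrtx_j$ of $\tree$, and $0$ for every leaf, which is exactly the set of $(1-\leafp_j)$ factors appearing in $\recurse(\rootv)$.

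Combining the two pieces gives $\prod_{\trgstep=1}^\trglength \m(\alignment_\trgstep(\tree)) = \prod_{\vrtx \in \alignment(\tree)} \leafp_\vrtx \cdot \prod_{\vrtx_j \text{ interior}} (1-\leafp_j) = \recurse(\rootv) = p(\tree)$, where the middle equality is just the expansion of the recursive definition of $\recurse$ (provable by a separate trivial induction on $\tree$, or by appeal to Definition of $p(\tree)$). The main obstacle — and the one genuinely non-routine step — is the Kraft-equality argument: one must be careful that it is the leaves of the \emph{internal tree} $\tree$ below $\vrtx_j$ (not of $\paramtree$) that are summed, that $\vrtx_j$ ranges only over vertices that are interior in $\tree$, and that the fractional exponents bookkeeping in the closed form for $\m$ is aligned with $|V_{i\rightarrow j}|$ counting vertices inclusively (so the exponent is $2^{-(k+1)}$ or $2^{-k}$ depending on convention) — getting this off-by-one consistent between the statement of $\m$, the closed form, and the Kraft sum is where care is needed. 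Everything else is bookkeeping over two short structural inductions.
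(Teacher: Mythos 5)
Your proof is correct, but it takes a genuinely different route from the paper's. The paper never writes down a closed form for $\m$: it keeps the product in the form $\prod_{\vrtx \in V}\left(\negm(\parent(\vrtx))\right)^{1/2}\recurse(\vrtx)$ over the current leaf set $V$ and performs a bottom-up contraction, repeatedly picking two sibling leaves $\vrtx_i,\vrtx_j$ with parent $\vrtx_k$ and using $\left(\negm(\vrtx_k)\right)^{1/2}\cdot\left(\negm(\vrtx_k)\right)^{1/2}=\negm(\vrtx_k)=\left(\negm(\parent(\vrtx_k))\right)^{1/2}(1-\leafp_k)$ together with $\recurse(\vrtx_k)=(1-\leafp_k)\recurse(\vrtx_i)\recurse(\vrtx_j)$ to merge their two factors into a single factor of the same shape at $\vrtx_k$; iterating shrinks the leaf set by one each step until only the root remains, yielding $\recurse(\rootv)=p(\tree)$. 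You instead unroll the recurrence into the explicit closed form for $\m$ and then do a global exponent count, showing via Kraft's equality that each interior vertex's $(1-\leafp_j)$ accumulates total exponent $\sum_{\text{leaves below }\vrtx_j}2^{-\mathrm{dist}}=1$. Both arguments are sound. The paper's contraction is more local and sidesteps the off-by-one bookkeeping you rightly flag (its own displayed closed form for $\m$ in the main text is in fact off by one in the exponent relative to the worked example, which your reading silently corrects); your version is arguably more illuminating, since it exhibits the closed form, explains structurally why the square roots distribute the $(1-\leafp_j)$ mass correctly over descendant leaves, and connects the construction to prefix codes --- at the cost of needing the extra Kraft lemma and more careful index conventions.
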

\begin{proof}
We can write,
\begin{align}
\prod_{\trgstep=1}^\trglength \m(\alignment_\trgstep(\tree)) =
\prod_{\vrtx \in V^\trglength}\left(\negm(\parent(\vrtx))\right)^{\frac{1}{2}} \cdot \pi(\vrtx) \label{eqn:mprod}
\end{align}
where $V^\trglength = \alignment(\tree)$, and $|V^\trglength| = \trglength$.

If $V^1$, then $V^1 = \{\rootv(\tree)\}$, then $\m(\rootv(\tree)) = \leafp_{\rootv(\tree)}$.

If $|V^\trglength| > 1$, since $\tree$ is a full binary tree, then there exists at least two vertices $\vrtx_i, \vrtx_j \in V$ such that $\parent(\vrtx_i) = \parent(\vrtx_j) = \vrtx_k$.
Let $V^{\trglength - 1} = (V \setminus \{\vrtx_i, \vrtx_j\}) \cup \{\vrtx_k\}$. Then,
\begin{align*}
\MoveEqLeft
\prod_{v \in V}\left(\negm(\parent(\vrtx))\right)^{\frac{1}{2}} \cdot \pi(\vrtx) \\
    &=  \left(\negm(\parent(\vrtx_k))\right)^{\frac{1}{2}} \cdot (1 - l_k) \pi(\vrtx_i) \pi(\vrtx_j) \\
    &\qquad\qquad\smashoperator{\prod_{\vrtx \in (V \setminus \{\vrtx_i, \vrtx_j\})}}
    \left(\negm(\parent(\vrtx))\right)^{\frac{1}{2}} \cdot \pi(\vrtx) \\
    &=   \left(\negm(\parent(\vrtx_k))\right)^{\frac{1}{2}} \cdot \pi(\vrtx_k)\\
    &\qquad\qquad\smashoperator{\prod_{v \in (V \setminus \{\vrtx_i, \vrtx_j\})}}
    \left(\negm(\parent(\vrtx))\right)^{\frac{1}{2}} \cdot \pi(\vrtx) \\
    & = \smashoperator{\prod_{v \in V^{\trglength - 1}}}
    \left(\negm(\parent(\vrtx))\right)^{\frac{1}{2}} \cdot \pi(\vrtx)
\end{align*}
Then $V^{\trglength-1}$ forms another full binary tree $\tree'$, where $\vrtx_k$ is now a leaf, and we can assign $\leafp_k \vcentcolon= \pi(v_k)$
Applying this identity, we can repeatedly reduce the number of factors by 1, until we get $V^1$
\end{proof}

\begin{proposition}\label{prop:leftbound}
If $\tree$ is an internal tree of $\paramtree$,
$$L_1(\tree) \in B_l(\paramtree), L_{|L(\tree)|}(\tree) \in B_r(\paramtree)$$
\end{proposition}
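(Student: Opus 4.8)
The plan is to prove Proposition~\ref{prop:leftbound} by structural induction on the internal tree $\tree$, mirroring the recursive decomposition $L(\tree) = [L(\tree'); L(\tree'')]$ from Definition~\ref{def:leafset}. I will only argue the claim $L_1(\tree) \in B_l(\paramtree)$; the claim $L_{|L(\tree)|}(\tree) \in B_r(\paramtree)$ is entirely symmetric (swap $\leftc \leftrightarrow \rightc$, $B_l \leftrightarrow B_r$, and ``left-most'' $\leftrightarrow$ ``right-most''), so I would state it follows ``by an identical argument.''

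First, the base case: if $\tree$ has depth $0$, then $\tree = \rootv(\tree) = \rootv(\paramtree)$, so $L_1(\tree) = \rootv(\paramtree)$, and $\rootv(\paramtree) \in B_l(\paramtree)$ by the definition of the left boundary (which explicitly includes $\{\rootv\}$). For the inductive step, suppose $\tree$ has positive depth. Since $\tree$ is a full binary tree, its root has two children, so $\tree = $ (root with left subtree $\tree'$ and right subtree $\tree''$). By Definition~\ref{def:leafset}, $L_1(\tree) = L_1(\tree')$. Now I invoke the already-proved Proposition that $\tree \in \treeset(\paramtree)$ implies $\tree' \in \treeset(\paramtree')$, where $\paramtree'$ is the left subtree of $\paramtree$ (this subtree exists because $\paramtree$ is complete and $\tree$'s root, which equals $\paramtree$'s root, is not a leaf of $\paramtree$). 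By the induction hypothesis applied to $\tree'$ and $\paramtree'$, we get $L_1(\tree') \in B_l(\paramtree')$.

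The remaining step — and the one place that needs a genuine (if small) argument — is the containment $B_l(\paramtree') \subseteq B_l(\paramtree)$, i.e. the left boundary of the left subtree of $\paramtree$ sits inside the left boundary of $\paramtree$ itself. This is where I'd be careful: from the definition $B_l(\paramtree) = \{\vrtx : \vrtx = \leftc^k(\rootv(\paramtree)),\ k \geq 1\} \cup \{\rootv(\paramtree)\}$, and noting $\rootv(\paramtree') = \leftc(\rootv(\paramtree))$, any vertex of the form $\leftc^k(\rootv(\paramtree'))$ equals $\leftc^{k+1}(\rootv(\paramtree))$, hence lies in $B_l(\paramtree)$; and $\rootv(\paramtree') = \leftc^1(\rootv(\paramtree)) \in B_l(\paramtree)$ as well. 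So $B_l(\paramtree') \subseteq B_l(\paramtree)$. Chaining this with $L_1(\tree) = L_1(\tree') \in B_l(\paramtree')$ gives $L_1(\tree) \in B_l(\paramtree)$, completing the induction.

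I expect the main obstacle to be purely bookkeeping: being precise about why $\paramtree'$ exists as a subtree of $\paramtree$ (needs $\paramtree$ complete so the root has a left child) and handling the edge case where $\tree'$ is itself a single leaf that is not a leaf of $\paramtree$ — but that case is subsumed by the base case of the induction, since a depth-$0$ internal tree of $\paramtree'$ is exactly $\rootv(\paramtree') \in B_l(\paramtree')$, which is all the induction hypothesis asserts. No delicate estimate is involved; the content is entirely the two set-inclusion facts about boundaries and the recursive leaf decomposition.
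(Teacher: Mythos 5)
Your proof is correct and rests on the same core observation as the paper's: the left-most leaf of $\tree$ is reached by iterating $\leftc$ from the shared root, every vertex on that path has the form $\leftc^k(\rootv(\paramtree))$, and hence the left-most leaf lies in $B_l(\paramtree)$ (and symmetrically for the right). The only difference is packaging --- the paper gives a direct descent argument (``call $\vrtx \leftarrow \leftc(\vrtx)$ until $\leftc(\vrtx) = \phi$''), whereas you unroll the same descent as a structural induction through the left subtrees, invoking the earlier subtree proposition and the containment $B_l(\paramtree') \subseteq B_l(\paramtree)$; your version makes the paper's informal step ``since all vertices of $\tree$ are vertices of $\paramtree$ and both share $\rootv$'' slightly more explicit, but the content is identical.
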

\begin{proof}

If $\tree = \rootv$, then the leftmost vertex is  $\rootv$, which is in $B_l$ by definition.

Otherwise, from Definitions \ref{def:internal} \&  \ref{def:full} we know that if $\leftc(\vrtx)$ for a given $\vrtx$ is $\phi$, then $\vrtx$ is a leaf.
We can then find the left-most leaf of $\tree$ by recursively calling $v = \leftc(\vrtx)$, until $\leftc(\vrtx) = \phi$.
Since all vertices of $\tree$ are vertices of $\paramtree$, and both trees share $\rootv$, the left-most leaf of $\tree$, $\vrtx  \in B_l$
\end{proof}
The argument for the rightmost vertex is symmetric.

\begin{proposition}
Let $\paramtree'$ and $\paramtree''$ be left and right subtrees of $\paramtree$.
Then,
\begin{align*}
    \succleave(\paramtree) = \succleave(\paramtree') \cup \succleave(\paramtree'') \cup (B_l(\paramtree') \times B_r(\paramtree''))
\end{align*}
\end{proposition}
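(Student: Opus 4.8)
The plan is to prove the set equality $\succleave(\paramtree) = \succleave(\paramtree') \cup \succleave(\paramtree'') \cup (B_l(\paramtree') \times B_r(\paramtree''))$ by double inclusion, using the recursive leaf decomposition $L(\tree) = [L(\tree'); L(\tree'')]$ from Definition~\ref{def:leafset} together with the earlier proposition that every internal tree $\tree$ of $\paramtree$ decomposes into internal trees $\tree'$ of $\paramtree'$ and $\tree''$ of $\paramtree''$ (and the converse gluing, which also holds). The only subtlety is the degenerate case $\tree = \rootv(\paramtree)$, which has a single leaf and contributes no successive-leaf pair, so it can be set aside at the outset; in the non-degenerate case $\tree$ genuinely splits.

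\textbf{Forward inclusion ($\subseteq$).} Take $(\vrtx_i, \vrtx_j) \in \succleave(\paramtree)$, so there is some internal tree $\tree$ of $\paramtree$ and index $\trgstep$ with $\vrtx_i = L_\trgstep(\tree)$ and $\vrtx_j = L_{\trgstep+1}(\tree)$. Writing $\tree'$, $\tree''$ for the left and right subtrees of $\tree$ and $k = |L(\tree')|$, the concatenation $L(\tree) = [L(\tree'); L(\tree'')]$ gives three cases. If $\trgstep+1 \le k$, then both $\vrtx_i$ and $\vrtx_j$ are consecutive leaves of $\tree'$, and since $\tree' \in \treeset(\paramtree')$ we get $(\vrtx_i,\vrtx_j) \in \succleave(\paramtree')$. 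Symmetrically, if $\trgstep \ge k+1$, the pair lies in $\succleave(\paramtree'')$. The remaining case is $\trgstep = k$: then $\vrtx_i = L_k(\tree')$ is the right-most leaf of $\tree'$ and $\vrtx_j = L_1(\tree'')$ is the left-most leaf of $\tree''$, so by Proposition~\ref{prop:leftbound} applied to $\tree'$ in $\paramtree'$ and to $\tree''$ in $\paramtree''$ we have $\vrtx_i \in B_r(\paramtree')$ and $\vrtx_j \in B_l(\paramtree'')$, i.e. $(\vrtx_i,\vrtx_j) \in B_r(\paramtree') \times B_l(\paramtree'')$. (Here one must be careful about the direction of the product; I would match the statement's orientation and, if needed, note that the product should be read as left-subtree-right-boundary paired with right-subtree-left-boundary, consistent with Figure~\ref{fig:successive}.)

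\textbf{Reverse inclusion ($\supseteq$).} For the three pieces on the right: if $(\vrtx_i,\vrtx_j)\in\succleave(\paramtree')$, witnessed by an internal tree $\tree'$ of $\paramtree'$, pick any internal tree $\tree''$ of $\paramtree''$ (the single-vertex tree $\rootv(\paramtree'')$ always works) and glue them under $\rootv(\paramtree)$ to form an internal tree $\tree$ of $\paramtree$; since $L(\tree)$ begins with $L(\tree')$, the consecutive pair $(\vrtx_i,\vrtx_j)$ survives in $\tree$, so $(\vrtx_i,\vrtx_j)\in\succleave(\paramtree)$. The $\succleave(\paramtree'')$ case is symmetric. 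For a pair $(\vrtx_i,\vrtx_j)\in B_r(\paramtree')\times B_l(\paramtree'')$: I would exhibit a single internal tree in which $\vrtx_i$ is the right-most leaf of the left subtree and $\vrtx_j$ the left-most leaf of the right subtree. Concretely, build $\tree'$ by following the right boundary of $\paramtree'$ down to $\vrtx_i$, making $\vrtx_i$ a leaf and, at each branch point along the way, taking the left child as an (arbitrary, e.g. minimal) internal subtree; this is a valid internal tree of $\paramtree'$ with $L_{|L(\tree')|}(\tree') = \vrtx_i$. Build $\tree''$ symmetrically so that $L_1(\tree'') = \vrtx_j$. Gluing gives an internal tree $\tree$ of $\paramtree$ with the pair $(\vrtx_i,\vrtx_j)$ occurring at the junction index $\trgstep = |L(\tree')|$, hence $(\vrtx_i,\vrtx_j)\in\succleave(\paramtree)$.

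\textbf{Main obstacle.} The routine part is the case split on $\trgstep$ relative to $k$; the one place that needs genuine care is the reverse inclusion for the boundary-product term, where I must actually \emph{construct} an internal tree realizing a prescribed right-boundary leaf in the left subtree and left-boundary leaf in the right subtree simultaneously — this is where Proposition~\ref{prop:leftbound} is used in the ``if'' direction and where one has to check that truncating $\paramtree$ along a boundary path yields a legitimate full binary subtree rooted at the correct node. I would also double-check the orientation convention of the Cartesian product against Definition~\ref{def:succleaf} and Figure~\ref{fig:successive} so the statement as typeset is the one proved.
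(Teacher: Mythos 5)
Your proof is correct, and its forward inclusion is essentially the paper's entire argument: the paper likewise decomposes $L(\tree) = [L(\tree'); L(\tree'')]$, splits on the position of the index $\trgstep$ relative to $|L(\tree')|$, and invokes Proposition~\ref{prop:leftbound} at the junction. Where you genuinely go beyond the paper is the reverse inclusion: the paper stops after establishing the containment $\succleave(\paramtree) \subseteq \succleave(\paramtree') \cup \succleave(\paramtree'') \cup \bigl(B_r(\paramtree') \times B_l(\paramtree'')\bigr)$ and never argues that every element of the right-hand side is realized by some internal tree of $\paramtree$. Your gluing construction --- padding a witness $\tree'$ with an arbitrary internal tree of $\paramtree''$ under $\rootv(\paramtree)$, and, for the boundary-product term, truncating $\paramtree'$ along its right boundary down to $\vrtx_i$ while closing off each branch point with a single-vertex leaf --- supplies exactly the missing half of the claimed set equality; without it one has only shown that \textsc{SuccessiveLeaves} misses no transition, not that it produces no spurious ones. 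You are also right to be suspicious of the orientation of the Cartesian product: the paper's own derivation places $L_{\trgstep}(\tree)$ in $B_r(\paramtree')$ and $L_{\trgstep+1}(\tree)$ in $B_l(\paramtree'')$, and both Figure~\ref{fig:successive} and the pseudo-code (which forms $B'_r \times B''_l$) agree with that, so the product in the proposition statement should read $B_r(\paramtree') \times B_l(\paramtree'')$; your proof proves the corrected statement.
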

\begin{proof}
$\paramtree$ is a complete tree so the left and right subtree $\paramtree'$ and $\paramtree''$ are both complete trees.
For any $\tree \in \treeset(\paramtree)$, then by Definition \ref{def:leafset}, we can find $\tree'$ and $\tree''$ which are internal trees of $\paramtree'$ and $\paramtree''$ respectively, such that $L(\tree) = [L(\tree'); L(\tree'')]$.
Then,
\begin{align*}
\intertext{For $1 \leq \trgstep < |L(\tree')|$,} 
\MoveEqLeft
 (L_n(\tree), L_{n+1}(\tree))  \\
 &= (L_n(\tree'), L_{n+1}(\tree')) \in \succleave(\paramtree')
 \intertext{For $ |L(\tree')| + 1\leq \trgstep < |L(\tree)|$,} 
\MoveEqLeft
 (L_n(\tree), L_{n+1}(\tree)) \\
 &=  (L_{n - |L(\tree')|}(\tree''), L_{n - |L(\tree')| + 1}(\tree''))  \in \succleave(\paramtree'')
\end{align*}
by Definition \ref{def:succleaf}.

For $n = |L(\tree')|$,  we know from Prop. \ref{prop:leftbound},
\begin{align*}
    L_n(\tree) &= L_n(\tree') \in B_r(\paramtree') \, \\
    L_{n+1}(\tree) &= L_1(\tree'') \in B_l(\paramtree'') 
\end{align*}
Therefore, $$(L_n(\tree), L_{n + 1}(\tree)) \in B_l(\paramtree') \times B_r(\paramtree'')$$
\end{proof}

\newpage
\section{Successive Leaf Construction Algorithm}\label{alg:succ_leaves}
\begin{algorithm}[h]
\begin{algorithmic}
   \caption{\textsc{SuccessiveLeaves}}
   \STATE {\bfseries Input:} vertex $\vrtx_i$
   \STATE {\bfseries Output:} successive leaf transitions $\succleave = \{(v_j, v_k), \dots\}$
   \STATE {\bfseries Output:} left boundary $B_l = \{i, \dots\}$
   \STATE {\bfseries Output:} right boundary $B_r = \{i, \dots\}$
   \IF{$v_i$ is a leaf}
    \STATE $\succleave \leftarrow \{\}$
    \STATE $B_l, B_r \leftarrow \{\vrtx_i\}, \{\vrtx_i\}$
   \ELSE
     \STATE $\succleave', B'_l, B'_r \leftarrow \textsc{SuccessiveLeaves}(\leftc(\vrtx_i))$
     \STATE $\succleave'', B''_l, B''_r \leftarrow \textsc{SuccessiveLeaves}(\rightc(\vrtx_i))$
     \STATE $\succleave \leftarrow \succleave' \cup \succleave'' \cup  (B'_r \times B''_l)$
     \STATE $B_l \leftarrow B'_l \cup \{\vrtx_i\}$
     \STATE $B_r \leftarrow B''_r \cup \{\vrtx_i  \}$   
   \ENDIF
\end{algorithmic}
\end{algorithm}
\newpage
\section{Decoding Algorithm} \label{sec:decode}
\begin{algorithm}[h]
   \caption{\textsc{DecodeJoint}}
   \label{alg:decodejoint}
\begin{algorithmic}
   \STATE {\bfseries Input: $[p(\mathbf{\trgtok}|\vrtx_1), \dots p(\mathbf{\trgtok}|\vrtx_{|V|})]$} 
   \STATE {\bfseries Output:} $\trgtok^*_{1:\trglength^*}$

   \FORALL{$v_i \in V$}
    \STATE $m_{\mathrm{arg}}^*(\vrtx_i) \leftarrow \argmax_\trgtok
    p(\mathbf{\trgtok}=\trgtok|\vrtx_i)$     \quad\COMMENT{Initialise} 
        \STATE $m^*(i) \leftarrow \max_\trgtok
    p(\mathbf{\trgtok}=\trgtok|\vrtx_i)$
   \ENDFOR
   \STATE $\trgstep \leftarrow 1$
   \FORALL{$\vrtx_i \in B_l$}
    \STATE $\M^*(\vrtx_i, 1) \leftarrow m^*(\vrtx_i)$
   \ENDFOR
   \WHILE{$\max_{\vrtx \in V} \M^*(\vrtx, \trgstep) \geq p^*$}
   \IF{$\max_{\vrtx_i \in B_r} \M^*(\vrtx_i, \trgstep) > p^*$}
    \STATE $\trglength^* \leftarrow t$     \qquad \COMMENT{Compute current best}
    \STATE $v^* \leftarrow \argmax_{\vrtx_i \in B_r} \M^*(\vrtx_i, \trglength^*)$
    \STATE $\trgtok^* \leftarrow [m^*_{\mathrm{arg}}(\vrtx^*, \trglength^*)]$
    \ENDIF
    \STATE $t \leftarrow t + 1$
    \FORALL{$v_i \in V$}
        \STATE $\M^*(\vrtx_i, \trgstep) \leftarrow m^*(\vrtx_i) \cdot \displaystyle \max_{\vrtx_j | (\vrtx_j, \vrtx_i) \in \succleave} \M^*(\vrtx_j, \trgstep-1)$
        \STATE $\M_{\mathrm{arg}}^*(\vrtx_i, \trgstep) \leftarrow  \displaystyle \argmax_{\vrtx_j | (\vrtx_j, v_i) \in \succleave} \M^*(\vrtx_j, \trgstep-1)$
    \ENDFOR
    
   \ENDWHILE
    \FOR{$t \leftarrow \trglength^*~\mathrm{to}~2$}
        \STATE $v^* \leftarrow \M_{\mathrm{arg}}^*(v^*, t)$ \qquad \COMMENT{Backtrace}
        \STATE $\trgtok^* \leftarrow [m^*_{\mathrm{arg}}(v^*, t)].\trgtok^* $
    \ENDFOR
\end{algorithmic}
\end{algorithm}

\section{Experiments}
\subsection{Encoder}\label{sec:encoder}

Before the embeddings are fed into the OM, we first produce contextualised embeddings, by first feeding it into a one layer bidirectional Gated Recurrent Unit (GRU; \citealt{cho2014properties}).
We then expose the following representations from the encoder to the decoder:

$\encode_\rho$ --- Final representation computed by OM. Can be thought of as the root representation.

$\encode_\iota$ --- Intermediate states ($\hat{M}_1 \dots \hat{M}_\srclength$) concatenated. Can be thought of as the representations of the internal nodes \emph{and} the leaves.

$\encode_\ell$ --- Input representations to the OM. Can be thought of as the representation of the  leaves.

$\encode_{ce}$ --- Contextualized embeddings from the GRU.

$\encode_e$ --- Embeddings fed to the GRU.

We also use the $\cell(\cdot, \cdot)$ function as defined in the paper.

\newpage
\subsection{SCAN sample trees}

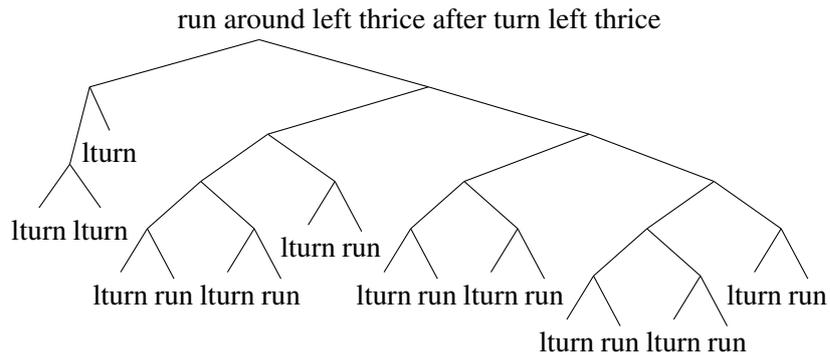
\begin{figure}[ht]
\scalebox{1}{
\begin{minipage}{1.0\linewidth}
\begin{center}
\begin{tabular}{c}
run around left thrice after turn left thrice\\
\begin{minipage}{1.0\textwidth}
\begin{center}
\begin{forest}
shape=coordinate,
deeper
[[[[lturn] [lturn]] [lturn]] [[[[[lturn] [run]] [[lturn] [run]]] [[lturn] [run]]] [[[[lturn] [run]] [[lturn] [run]]] [[[[lturn] [run]] [[lturn] [run]]] [[lturn] [run]]]]]]
\end{forest}
\end{center}
\end{minipage}
\end{tabular}
\end{center}
\end{minipage}}
\caption{Erroneous tree example from the model trained on the \textsc{Length} split.}
\label{fig:scantreesneg}
\end{figure}
\subsection{Multi30k Translation Sample Trees}
\begin{figure}[ht]
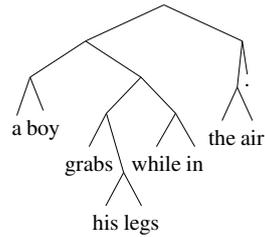

\begin{center}
\scalebox{0.85}{\treetablecell{Ein älterer Mann spielt ein Videospiel.}{[[[[an], [[older] [man]]], [[is] [playing]]], [[[a] [[video] [game]]] [.]]]}} \\
\scalebox{0.85}{\treetablecell{Ein Mädchen an einer Küste mit einem Berg im Hintergrund.}{[[[[a] [girl]], [[[on] [[a] [shore]]] [with]]] [[[a] [mountain]] [[[in] [[the] [background]]] [.]]]]}}\\
\scalebox{0.85}{\treetablecell{Ein Junge greift sich ans Bein während er in die Luft springt.}{[[[[a] [boy]], [[[grabs] [[his] [legs]]], [[while] [in]]]], [[[the] [air]] [.]]]}}
\caption{Trees found by our model from Multi30K De-En.}
\end{center}
\end{figure}
\end{document}